\pgfplotsset{compat=1.17}
\newtheorem{definition}{Definition}
\newtheorem{example}{Example}
\newcommand{\frameworkname}{HyperJoin}
\newcommand{\modelname}{HIN}
\newcommand{\modelfullname}{Hierarchical Interaction Network}
\newcommand{\HIname}{Hyperedge Interaction}
\newcommand{\HRname}{Hyperedge Refinement}
\newcommand{\HIabbr}{HI}
\newcommand{\HRabbr}{HR}
\newcommand\vldbdoi{10.14778/3665844.3665853}
\newcommand\vldbpages{2227 - 2240}
\newcommand\vldbvolume{17}
\newcommand\vldbissue{9}
\newcommand\vldbyear{2024}
\newcommand\vldbauthors{\authors}
\newcommand\vldbtitle{\shorttitle} 
\newcommand\vldbavailabilityurl{URL_TO_YOUR_ARTIFACTS}
\newcommand\vldbpagestyle{plain} 
\begin{document}
\title{HyperJoin: LLM-augmented Hypergraph Link Prediction for Joinable Table Discovery}

\author{Shiyuan Liu}
\affiliation{%
  \institution{University of Technology Sydney}
  }
\email{shiyuan.liu-1@student.uts.edu.au}
\orcid{0000-0003-2424-8337}

\author{Jianwei Wang}
\affiliation{%
  \institution{University of New South Wales}
  }
\email{jianwei.wang1@unsw.edu.au}
\orcid{0009-0000-7887-4179}

\author{Xuemin Lin}
\affiliation{%
 \institution{ACEM, Shanghai Jiao Tong University}
  }
\email{xuemin.lin@sjtu.edu.cn}
\orcid{0000-0003-2396-7225}

\author{Lu Qin}
\affiliation{%
 \institution{University of Technology Sydney}
  }
\email{lu.qin@uts.edu.au}
\orcid{0000-0001-6068-5062}

\author{Wenjie Zhang}
\affiliation{%
  \institution{University of New South Wales}
  }
\email{wenjie.zhang@unsw.edu.au}
\orcid{0000-0001-6572-2600}

\author{Ying Zhang}
\affiliation{%
  \institution{University of Technology Sydney}
  }
\email{ying.zhang@uts.edu.au}
\orcid{0000-0002-2674-1638}

\begin{abstract}
As a pivotal task in data lake management, joinable table discovery has attracted widespread interest.
While existing language model-based methods achieve remarkable performance by combining offline column representation learning with online ranking, their design insufficiently accounts for the underlying structural interactions: (1) offline, they directly model tables into isolated or pairwise columns, thereby struggling to capture the rich inter-table and intra-table structural information;
and (2) online, they rank candidate columns based solely on query-candidate similarity, ignoring the mutual interactions among the candidates, leading to incoherent result sets.
To address these limitations, we propose \frameworkname{}, a large language model (LLM)-augmented \textbf{\underline{Hyper}}graph framework for \textbf{\underline{Join}}able table discovery.
Specifically, we first construct a hypergraph to model tables using both the intra-table hyperedges and the LLM-augmented inter-table hyperedges. Consequently, the task of joinable table discovery is formulated as link prediction on this constructed hypergraph.
We then design \modelname{}, a \modelfullname~that learns expressive column representations by integrating global message passing across hyperedges with local message passing between columns and hyperedges.
To strengthen coherence and internal consistency in the result columns, we cast online ranking as a coherence-aware top-$K$ column selection problem. 
We then introduce a reranking module that leverages a maximum spanning tree algorithm to prune noisy connections and maximize coherence.
Experiments demonstrate the superiority of \frameworkname{}, achieving average improvements of 21.4\% (Precision@15) and 17.2\% (Recall@15) over the best baseline.

\end{abstract}

\begin{CCSXML}
<ccs2012>
   <concept>
       <concept_id>10002951.10003227</concept_id>
       <concept_desc>Information systems~Information systems applications</concept_desc>
       <concept_significance>500</concept_significance>
       </concept>
 </ccs2012>
\end{CCSXML}

\maketitle

\pagestyle{\vldbpagestyle}
\begingroup\small\noindent\raggedright\textbf{PVLDB Reference Format:}\\
\vldbauthors. \vldbtitle. PVLDB, \vldbvolume(\vldbissue): \vldbpages, \vldbyear.\\
\href{https://doi.org/\vldbdoi}{doi:\vldbdoi}
\endgroup
\begingroup
\renewcommand\thefootnote{}\footnote{\noindent
This work is licensed under the Creative Commons BY-NC-ND 4.0 International License. Visit \url{https://creativecommons.org/licenses/by-nc-nd/4.0/} to view a copy of this license. For any use beyond those covered by this license, obtain permission by emailing \href{mailto:info@vldb.org}{info@vldb.org}. Copyright is held by the owner/author(s). Publication rights licensed to the VLDB Endowment. \\
\raggedright Proceedings of the VLDB Endowment, Vol. \vldbvolume, No. \vldbissue\ %
ISSN 2150-8097. \\
\href{https://doi.org/\vldbdoi}{doi:\vldbdoi} \\
}\addtocounter{footnote}{-1}\endgroup

\ifdefempty{\vldbavailabilityurl}{}{
\vspace{.3cm}
\begingroup\small\noindent\raggedright\textbf{PVLDB Artifact Availability:}\\
The source code, data, and/or other artifacts have been made available at \url{https://github.com/T-Lab/HyperJoin}.
\endgroup
}

\section{Introduction}
\label{sec:Introduction}

\begin{figure}
\vspace{0.5pt}
\centering
\includegraphics[width=1\linewidth]{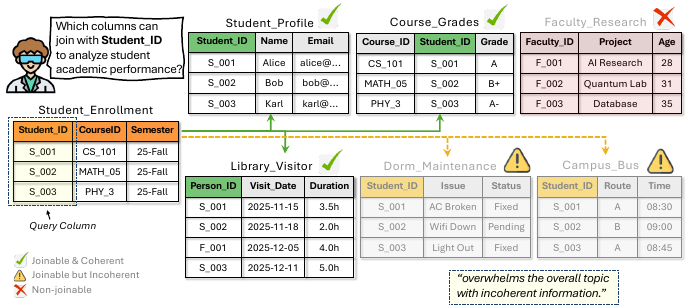}
\caption{An example of joinable table discovery in data lake.}
\label{fig:example}
\end{figure}

The proliferation of open data portals and enterprise data lakes has led to massive collections of tabular data, creating tremendous opportunities for data analysis and machine learning. 
However, these tables are often fragmented, making it difficult to leverage complementary information from other tables to enhance analysis.
Prior studies~\cite{chai2020human,lai2025auto,deng2017data,deng2024misdetect,nargesian2019data} highlight the prevalence of this concern, reporting that approximately 89\% of real-world business intelligence projects involve multiple tables that require joins.
In this work, we investigate joinable table discovery. Given a table repository $\mathcal{T}$ and a query column $C_q$, joinable table discovery aims to find columns $C$ from $\mathcal{T}$ that exhibit a large number of semantically matched cells with $C_q$. Thus, the table containing column $C$ can be joined with the query table to enrich features and enhance downstream data analysis or machine learning tasks.

\begin{example}
As illustrated in Figure~\ref{fig:example}, a data scientist analyzes student academic performance using the \texttt{Student\allowbreak \_Enrollment} table and searches for columns joinable with the \texttt{Student\_ID} ($C_q$).
The data lake contains several semantically joinable columns (green checkmarks).
However, columns from tables like \texttt{Dorm\_Maintenance} and \texttt{Campus\_Bus}, while joinable, originate from unrelated service domains and may overwhelm the analysis with incoherent information (yellow warnings).
An effective join discovery method should therefore identify joinable columns while ensuring that the returned results remain coherent to better support the analysis.
\end{example}

\begin{figure*}[t]
\centering
\includegraphics[width=0.99\linewidth]{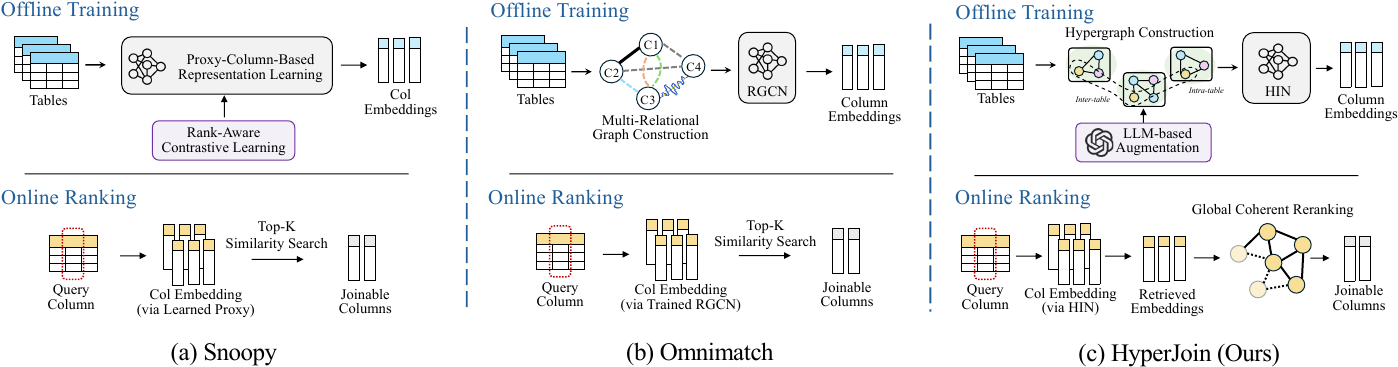}
\caption{Framework comparisons of joinable table discovery methods.}
\vspace{-3mm}
\label{fig:comparison}
\end{figure*}

Recognizing the importance and practical value of joinable table search, a set of methods has been developed (refer to the survey and benchmark paper~\cite{deng2024lakebench} for an in-depth discussion).
{\color{black}
Early non-learning methods generally cast joinable table discovery as a set-similarity search problem \cite{khatiwada2022integrating}. 
Representative approaches, such as JOSIE~\cite{zhu2019josie} and LSH Ensemble~\cite{zhu2016lsh}, model each column as a set of distinct values and identify joinable columns based on their overlap, supported by inverted indexes or MinHash-based LSH for efficiency.}
However, these methods limit the join type to equi-join based on exact string matching, overlooking numerous semantically matched cells (e.g., ``NY'' semantically matches ``New York''), thus underestimating column joinabilities. 

Recently, pre-trained language models (PLMs)-based methods are introduced to exploit the semantic capabilities of PLMs (e.g., BERT~\cite{devlin2019bert} and DistilBERT~\cite{sanh2019distilbert}) for more effective joinable table discovery.
Representative methods include DeepJoin~\cite{dong2023deepjoin}, Snoopy~\cite{guo2025snoopy} and Omnimatch~\cite{koutras2025omnimatch}.
As illustrated in Figure~\ref{fig:comparison}(a) and Figure~\ref{fig:comparison}(b), these methods typically integrate an offline training phase with an online ranking phase to optimize both efficiency and effectiveness. During the offline phase, a model is trained to learn column representations, encoding the content of each column into a dense vector. In the online phase, given a query column, the system identifies joinable columns based on the similarity between the query and candidate columns.

\noindent \textbf{Motivations.}
Although these PLMs-based methods achieve promising performance, their designs insufficiently account for the underlying structural interactions within tables, which are critical for accurate joinable table discovery. Specifically: 

Firstly, during the offline training phase, these methods typically partition tables into isolated or pairwise columns.
This approach struggles to capture rich intra-table structural information (such as high-order dependencies among columns and the hierarchical relationships between tables and columns) as well as inter-table structural information (such as global join topologies).
As a result, the learned representation lacks sufficient expressiveness to capture these intricate dependencies, leading to suboptimal accuracy. 
Specifically, DeepJoin~\cite{dong2023deepjoin} and Snoopy~\cite{guo2025snoopy} process columns individually to learn their representations. Although Omnimatch~\cite{koutras2025omnimatch} attempts to utilize a graph to learn to model the column relationships. However, it still primarily focuses on pairwise relationships. 

Secondly, in the online ranking phase, existing methods typically perform simple top-$K$ ranking based solely on the query-candidate similarity scores. This approach neglects the mutual correlations and latent dependencies among the candidate columns. Consequently, this yields incoherent result sets. Integrating such columns may introduce notable noise, which overwhelms the overall insights with irrelevant information. For example,
as shown in Figure~\ref{fig:example}, when the query is \texttt{Student\_ID}, a Top-K retrieval may include columns from \texttt{Campus\_Bus} and \texttt{Dorm\_Maintenance}.
Although these columns are joinable, they should be ranked lower since they originate from unrelated domains and fail to support the intended analysis of student academic performance.

\noindent\textbf{Challenges.} To design an accurate joinable table discovery method, two challenges exist below:

\textit{Challenge I: How to capture the complex structural relationships with a unified framework for effective offline representation learning?}
Table repositories exhibit complex structural relationships comprising both inter-table and intra-table relationships. 
Moreover, these relationships are not isolated; rather, they are strongly interdependent. For example, inter-table column joinability may only be meaningful when contextualized with fine-grained intra-table semantics, making isolated modeling approaches inadequate. Therefore, a significant challenge lies in devising a unified framework that jointly employs these complex structural relationships for joinable table discovery.

\textit{Challenge II: How to effectively and efficiently search globally relevant and coherent joinable tables from large-scale data lakes?} 
Data lakes typically contain hundreds to thousands of heterogeneous tables. When accounting for mutual context, the number of potential join paths grows exponentially. Furthermore, as we proved in Theorem~\ref{thm:np_hard_formulation}, identifying the optimal $K$-column subset that maximizes joint coherence is NP-hard, causing exhaustive search to be intractable for large-scale data lakes.

\noindent \textbf{\color{black}{Our approaches.}}
Driven by the aforementioned challenges, we propose \frameworkname{}, a new \textbf{\underline{Hyper}}graph-based framework that recasts the problem of \textbf{\underline{Join}}able table search as link prediction on the hypergraph to better leverage the structural context of tables.
As illustrated in Figure~\ref{fig:comparison}(c), similar to previous PLM-based methods, \frameworkname{} also employs a two-phase architecture: an offline representation learning phase and an online ranking phase.

{\color{black}To address Challenge I, in the offline phase, we propose to use a hypergraph to model the structure prior of tables and learn column representations. Formally, the hypergraph is formulated with columns serving as nodes and comprising two types of hyperedges: (1) \textit{Large language model (LLM)-augmented inter-table hyperedges} and (2) \textit{Intra-table hyperedges}. Specifically, the LLM-augmented hyperedges connect all columns that belong to the same connected component of joinable columns, explicitly modeling inter-table information of joinability.
We first employ LLMs to augment data by generating semantically equivalent column name variants, and then treat each original column together with its LLM-generated variants as a unified join-key entity, using this expanded set to construct inter-table hyperedges that propagate joinability signals across tables and their LLM-generated alternatives. 
In addition, the intra-table hyperedges connect all columns within the same table, capturing high-order relationships among columns and column-table relationships. 
Furthermore, we design \modelname, a \modelfullname~that performs hierarchical message passing aligned with our two types of hyperedges: it first aggregates column information within intra-table and LLM-augmented inter-table hyperedges, and then enables all-to-all interaction across hyperedges through a global mixing step, allowing each column to better leverage both the intra-table schema relationships and inter-table joinability signals.

{\color{black}To solve Challenge II, in the online phase, we introduce a global coherent reranking module to improve both efficiency and accuracy. Specifically, we first formulate the task of global coherent joinable table selection as a coherence-aware top-$K$ column selection problem that balances individual query relevance with inter-candidate coherence.
We prove that this problem is NP-hard. 
Thus, the proposed global coherent reranking module first constructs a weighted complete graph spanning the query and target columns to capture the structural context of columns after the first-stage top-B retrieval ($K<B$). An efficient greedy approximation algorithm that leverages a Maximum Spanning Tree (MST) algorithm are employed to prune noisy connections and maximize the global matching score.

\noindent \textbf{Contributions.}
The main contributions are as follows:
\begin{itemize}[leftmargin=10pt, topsep=1pt]
\item{} We propose \frameworkname, a new hypergraph-based framework for joinable table discovery that exploits structure prior via dual-type hypergraphs. It contains the offline representation learning phase and the online ranking phase.

\item{} In the offline phase, we build a hypergraph with LLM-augmented inter-table hyperedges and intra-table hyperedges, and design the \modelname~for effective message passing and learn expressive column representation.

\item{} In the online phase, we propose a global coherent reranking module that leverages an efficient greedy approximation algorithm with the maximum spanning tree to tractably discover the coherent result columns.

\item{} Experiments across 4 benchmarks show the superiority of \frameworkname, achieving average improvements of 21.4\% in Precision@15 and 17.2\% in Recall@15 over previous state-of-the-art methods.

\end{itemize}

\section{Preliminaries}
\label{sec:Preliminary}

\begin{table}[t]
\centering 
\caption{Symbols and Descriptions}
\label{tab:symbol}
\begin{tabular}{|p{1.8cm}|p{6.0cm}|}
\hline
\cellcolor{lightgray}\textbf{Notation} & \cellcolor{lightgray}\textbf{Description} \\ \hline
$\mathcal{T}$ & table repository\\ \hline
${T}$ & a table, consisting of multiple columns\\ \hline
$\mathcal{C}$ & column repository (all columns in $\mathcal{T}$) \\ \hline
$C$ & a column, represented as a set of cell values\\ \hline
$C_q$ & query column \\ \hline
$|C|$ & cardinality (number of cells) of column $C$ \\ \hline
$\mathbf{h}_C \in \mathbb{R}^d$ & embedding of column $C$ \\ \hline
$J(C_q, C_t)$ & joinability score between $C_q$ and $C_t$\\ \hline 
$K$ & number of results to return\\ \hline                         $\mathcal{R}, \mathcal{R}^*$ & result set / optimal result set of size $K$\\ \hline
$\mathcal{H}(V, E)$ & hypergraph with nodes $V$ and hyperedges $E$\\ \hline
$E_{\text{inter}}$ & inter-table hyperedges\\ \hline
$E_{\text{intra}}$ & intra-table hyperedges\\ \hline
$f^{\theta}(\cdot)$ & neural network model with parameters $\theta$\\ \hline
$\lambda$ & coherence weight in reranking objective\\ \hline

\end{tabular}
\end{table}

\subsection{Problem Definition}

We follow the typical setting of joinable table discovery
\cite{dong2021efficient, dong2023deepjoin, fan2023semantics, deng2024lakebench, guo2025snoopy}.
A data lake contains a table repository $\mathcal{T} = \{T_1, T_2, \dots, T_M\}$ with $M$ tables.
Each table $T_i$ consists of $n_i$ rows (tuples) and $m_i$ columns (attributes), where each cell value is denoted as $c_{ij}$.
We focus on textual columns and extract all of them from $\mathcal{T}$ into a column repository $\mathcal{C}=\{C_1, C_2, \dots, C_N\}$ containing $N$ columns across all tables.
Each column $C$ is represented as a set of cell values $\{c_1, c_2, \dots, c_{|C|}\}$ and may also include associated metadata such as column names, where $|C|$ denotes the cardinality (number of cells) of column $C$.
We use $C_q$ to denote the query column.
$\mathcal{R}^*$ and $\mathcal{R}$ are utilized to denote the ground-truth optimal result set and a predicted result set, respectively.
Next, we give the formal definition of joinable table discovery.

\begin{definition}[Pairwise Column Joinability]
\label{def:pairwise_joinability}
Given a query column $C_q$ and a candidate column $C_t$ from the repository $\mathcal{C}$, their pairwise joinability score $J(C_q, C_t)$ is defined as the average best-match similarity for each value in the query column:
\begin{equation}
\label{eq:joinability}
J(C_q, C_t) = \frac{1}{|C_q|} \sum_{v_q \in C_q} \max_{v_t \in C_t} \text{sim}(v_q, v_t)
\end{equation}
This directional score measures how well $C_q$ is covered by $C_t$. $\text{sim}(v_q, v_t) \in [0,1]$ is the semantic value correspondence that quantifies the semantic relatedness between two cell values $v_i$ and $v_j$.
\end{definition}

\begin{definition}[Joinable Table Discovery]
\label{def:jtd}
Given a query column $C_q$, a table repository $\mathcal{T}$, and an integer $K$, the task of joinable table discovery is to select a result set $\mathcal{R}^* \subseteq \mathcal{C}$ of size $K$ that each $C_t \in \mathcal{R}$ has a high joinability with $C_q$.

\end{definition}

\begin{figure*}[t]
  \centering
  \includegraphics[width=0.96\linewidth]{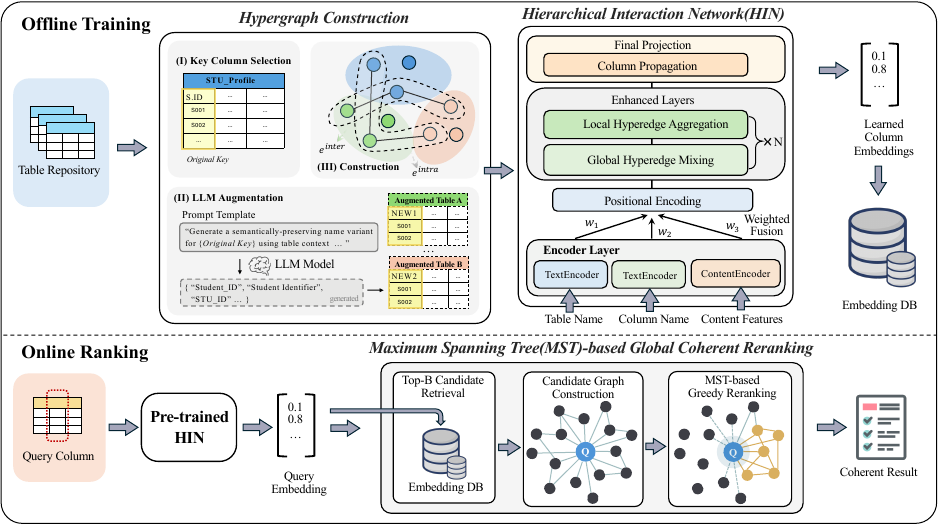}
  \vspace{-2mm}
  \caption{Framework overview of HyperJoin.}
  \vspace{-4mm}
  \label{fig:framework}
\end{figure*}

\subsection{Hypergraph}

A hypergraph is a generalization of a graph in which an edge (called a \emph{hyperedge}) can connect \emph{any number} of vertices rather than only two \cite{lee2020hypergraph,arafat2023neighborhood}.
Formally, a hypergraph $\mathcal{H}=(V,E)$ consists of a vertex set $V$ and a hyperedge set $E$, where each hyperedge $e\in E$ denotes a \emph{group-wise} relation among multiple vertices (i.e., $\emptyset \neq e \subseteq V$). 
This capability makes hypergraphs a natural modeling tool for higher-order interactions in real-world data, such as co-authorship \cite{wu2022hypergraph} and biological complexes \cite{lugo2021classification}.

\subsection{Language Model and State of the Art}

Language models serve as the cornerstone of modern Natural Language Processing (NLP), fundamentally aiming to model the probability distribution of text sequences \cite{khattab2020colbert}.
A prevailing architecture in this domain is the auto-regressive (AR) model, which generates sequences by iteratively predicting the next token conditioned on the preceding context \cite{wang2024missing,wang2025llm}.
Building upon these foundations, the field has evolved from PLMs to LLMs \cite{zhang2023large,wang2024efficient,wang2024neural}.

PLMs, such as the auto-encoding BERT \cite{devlin2019bert} and its variants DistilBERT \cite{sanh2019distilbert} and MPNet \cite{song2020mpnet}, revolutionized NLP by learning contextualized representations from massive text corpora. These models map textual sequences into dense vectors that capture semantic similarity, typically requiring task-specific fine-tuning for downstream applications. In contrast, LLMs (e.g., the GPT series~\cite{radford2018improving}) represent a significant leap in scale, often containing tens or hundreds of billions of parameters. While LLMs share the pre-training paradigm with PLMs, their massive scale unlocks ``emergent abilities'', such as zero-shot reasoning and in-context learning, enabling them to solve complex tasks without explicit parameter updates.

\noindent\textbf{State of the Art.}
Current state-of-the-art solutions for joinable table discovery are mostly PLM-based, which follow a common offline/online paradigm.
In the offline stage, a model is trained (or fine-tuned), and embeddings are precomputed for all columns in the repository.
An approximate nearest neighbor index (e.g., IVFPQ \cite{jegou2010product} or HNSW \cite{malkov2018efficient} in the FAISS library) is often built for efficient retrieval \cite{wang2024simpler}.
In the online stage, the query column is encoded using the same model, and its top-$k$ most similar columns are retrieved from the index as the joinable tables. 

Despite this shared framework, these methods vary in how they obtain column embeddings.
DeepJoin~\cite{dong2023deepjoin} fine‑tunes the DistilBERT to embed columns into fixed‑dimensional vectors. 
It transforms column contents and metadata into a textual sequence using a set of contextualization options.
Snoopy~\cite{guo2025snoopy} introduces proxy columns to overcome limitations of direct PLM encoding. It learns a set of proxy column embeddings through rank‑aware contrastive learning. 
Omnimatch~\cite{koutras2025omnimatch} constructs a similarity graph where nodes represent columns and edges encode multiple signals.
It uses a Relational Graph Convolutional Network (RGCN) to learn column embeddings that aggregate neighborhood information. 
\section{Overview}
\label{sec:overview}

The overall architecture of \frameworkname~is illustrated in Figure~\ref{fig:framework}.
Instead of treating tables as isolated or pairwise columns, we model the entire data lake as a hypergraph, where nodes represent columns, and hyperedges explicitly encode both intra-table schema constraints and inter-table correlations. Consequently, we recast the joinable table discovery task as a link prediction problem on this hypergraph.
To effectively solve this link prediction task, we design the \modelname. This network learns expressive column embeddings by performing hierarchical message passing.
It consists of three key components: (1) a positional encoding module that captures table identity and global structural roles; (2) a Local Hyperedge Aggregation module that propagates information within the specific intra- and inter-table contexts; and (3) a Global Hyperedge Mixing module that enables all-to-all interaction across hyperedges.
In the online search phase, we introduce a global coherent reranking module to select the joinable column set. We formulate the task as a coherence-aware top-$K$ column selection problem on the candidate graph and solve it using an MST-based algorithm. 

\vspace{-2mm}
\section{Offline Representation Learning}
\label{sec:pretrain}

In this section, we present the offline representation learning phase of \frameworkname.
The section is organized into two main components.
First, Section~\ref{sec:hypergraph_construction} describes how we construct a unified hypergraph from the data lake, including initial column feature encoding, LLM-augmented inter-table hyperedge and intra-table hyperedge construction.
Second, Section~\ref{sec:model_design} presents our \modelfullname~that operates on the constructed hypergraph, detailing the network architecture and training objectives.

\vspace{-2mm}
\subsection{Hypergraph Construction}
\label{sec:hypergraph_construction}
To more comprehensively capture the structure context of the table lake, we model the data lake as a hypergraph where a single hyperedge can connect multiple columns simultaneously, enabling unified modeling of these complex structural relationships.

\noindent\textbf{Hypergraph Definition.}
Formally, we have a hypergraph
$\mathcal{H} = (\mathcal{V}, \mathcal{E}, \mathbf{X}^v, \mathbf{X}^e, \boldsymbol{\Pi})$,
where $\mathcal{V} = \{v_1, \ldots, v_N\}$ denotes the set of $N$ nodes, each corresponding to a column in the data lake,
and $\mathcal{E} = \{e_1, \ldots, e_M\}$ denotes the set of $M$ hyperedges.
Each node $v_i \in \mathcal{V}$ is associated with an initial feature vector $\mathbf{x}_{v_i}^v \in \mathbb{R}^d$,
and $\mathbf{X}^v \in \mathbb{R}^{N \times d}$ stacks all node features row-wise.
Each hyperedge $e_j \subseteq \mathcal{V}$ represents a column context (e.g., intra-table or inter-table) and is optionally associated with an initial hyperedge feature vector
$\mathbf{x}_{e_j}^e \in \mathbb{R}^{d_e}$, where $\mathbf{X}^e \in \mathbb{R}^{M \times d_e}$ stacks all hyperedge features row-wise.
Hyperedges are allowed to overlap, meaning a column may participate in multiple contextual groups.
The hypergraph structure is represented by a node--hyperedge incidence matrix
$\boldsymbol{\Pi} \in \{0,1\}^{N \times M}$, where $\boldsymbol{\Pi}_{ij} = 1$ if $v_i \in e_j$ and $0$ otherwise.

\noindent\textbf{Hypergraph Construction.}
The construction of $\mathcal{H}$ proceeds in three steps: (1) we encode initial node features $\mathbf{X}^v$ by fusing semantic signals from schema names and data content; (2) we construct intra-table hyperedges $\mathcal{E}_{\text{intra}}$ based on physical table schemas; and (3) we construct LLM-augmented inter-table hyperedges $\mathcal{E}_{\text{inter}}$ by identifying join-connected column groups, augmented with LLM-generated semantic variants.

\label{sec:initial_encoding}
\noindent\textit{Step 1: Initial Column Feature Encoding.}
Columns in a data lake contain rich semantic information from multiple sources: the table name provides high-level context, the column name describes specific semantics, and the cell values offer direct evidence.
A robust initial representation should fuse all these sources to provide a strong foundation for the subsequent structure-aware learning.

Each column node $v_i$ is initialized with a feature vector $\mathbf{x}_{v_i}^v$ by fusing three components: table name, column name, and cell values.
We encode the table name and column name using two independent text encoders, producing $\mathbf{h}^{\text{table}}_i \in \mathbb{R}^{d_0}$ and $\mathbf{h}^{\text{col}}_i \in \mathbb{R}^{d_0}$.
For cell values, we obtain a content representation by aggregating pre-trained value embeddings with simple statistical pooling and passing the result through a two-layer MLP, yielding $\mathbf{h}^{\text{content}}_i \in \mathbb{R}^{d_0}$.
Let $\mathcal{S}=\{\texttt{table},\texttt{col},\texttt{content}\}$ denote the components and $\mathbf{w}\in\mathbb{R}^{3}$ be learnable fusion weights; we compute
\begin{equation}
\mathbf{h}^{(0)}_i = \sum_{s\in\mathcal{S}} \text{softmax}(\mathbf{w})_s \cdot \mathbf{h}^{s}_i,
\end{equation}
followed by a linear projection to obtain 
$\mathbf{x}_{v_i}^v=\mathbf{W}_0\mathbf{h}^{(0)}_i \in \mathbb{R}^{d}$.
Stacking  $\mathbf{x}_{v_i}^v$ yields the node feature matrix $\mathbf{X}^v\in\mathbb{R}^{N\times d}$ used in $\mathcal{H}$.

\label{sec:intra_hyperedges}
\noindent\textit{Step 2: Intra-Table Hyperedges.}
For each table $T$, we construct an intra-table hyperedge
$e^{\text{intra}}_T = \{v_i \mid v_i \text{ corresponds to a column in } T\}$.
This is motivated by two points:
(i) tables naturally group semantically related columns;
(ii) columns co-occurring in the same table often share context beyond join relationships.
Thus, these hyperedges capture schema-level co-occurrence signals.

\label{sec:inter_hyperedges}
\noindent\textit{Step 3: Inter-Table Hyperedges.}
To construct $\mathcal{E}_{\text{inter}}$, we build an auxiliary join graph $\mathcal{G}_{\text{join}}=(\mathcal{V}, \mathcal{P}_{\text{join}})$, where $(v_i, v_j)\in \mathcal{P}_{\text{join}}$ indicates that the corresponding column pair is joinable.
We apply Union-Find to obtain connected components $\{\mathcal{C}_k\}$ of $\mathcal{G}_{\text{join}}$;
each component with $|\mathcal{C}_k|\ge 2$ induces an inter-table hyperedge
$e^{\text{inter}}_k=\mathcal{C}_k$, and we set
$\mathcal{E}_{\text{inter}}=\{e^{\text{inter}}_k\}$.
To enrich the joinability signals without ground-truth join information, we generate the join graph with LLM-generated semantic variants as follows:
(i) \emph{key column selection.} We identify candidate join-key columns using standard keyness heuristics (e.g., completeness/uniqueness signals);
(ii) \emph{LLM augmentation.} For each selected join-key column $C_{\text{key}}$, we use an LLM to generate a semantically-preserving name variant $C'_{\text{key}}$ conditioned on table context (e.g., table/column names and sample rows).
The prompt template is shown in Box~\ref{box:prompt};
(iii) \emph{inter-table hyperedge generation.} We add edges $(C_{\text{key}}, C'_{\text{key}})$ to $\mathcal{G}_{\text{join}}$, then recompute connected components to form the final $\mathcal{E}_{\text{inter}}$.

\vspace{-2pt}
\begin{tcolorbox}[
      colback=gray!5!white,
      colframe=gray!75!black,
      title=Prompt Design,
      fonttitle=\bfseries
]
\label{box:prompt}
\vspace{-2pt}
\textbf{Input context.} Column name, table name, all column names in the table, and 3--5 sampled rows.

\textbf{Transformation guidance.} Generate a semantically equivalent variant that follows common database naming conventions:
\begin{itemize}[leftmargin=10pt, topsep=2pt, itemsep=0pt]
\item Separator changes (e.g., \texttt{CustomerID} $\rightarrow$ \texttt{Customer\_ID}, \texttt{customer-id})
\item Abbreviations (e.g., \texttt{CustomerID} $\rightarrow$ \texttt{CustID})
\item Case conventions (e.g., \texttt{CustomerID} $\rightarrow$ \texttt{customerId}, \texttt{CUSTOMERID})
\item Synonyms (e.g., \texttt{CustomerID} $\rightarrow$ \texttt{ClientID})
\end{itemize}

\textbf{Output.} Return a single most plausible variant in a structured format (e.g., JSON:
\texttt{\{"perturbed":"Cust\_ID"\}}).
\vspace{-4pt}
\end{tcolorbox}
\vspace{-2pt}

\noindent\textbf{Theoretical Analysis.}
We provide a formal justification for why incorporating both intra-table and inter-table contexts
can improve joinable table discovery, and show that prior SOTA methods are special cases of our formulation. All the proofs are in our online full version~\cite{hyperjoin_full}.

\vspace{-4pt}
\paragraph{Setup.}
Let $Y_{ij}\in\{0,1\}$ denote whether two columns $(v_i,v_j)$ are joinable.
Let $\mathbf{x}_i$ be the initial feature of column $v_i$.
For each column $v_i$, define two context variables induced by the constructed hypergraph:
(i) its intra-table context $\mathcal{N}^{\text{intra}}(v_i)$ (the set of columns co-located with $v_i$ in the same table),
and (ii) its inter-table context $\mathcal{N}^{\text{inter}}(v_i)$ (the set of columns connected with $v_i$ through joinable
connectivity, e.g., within the same join-connected component).
We write the available information for deciding joinability as a feature set:
\begin{equation}
\begin{aligned}
\mathcal{F}_{\text{single}}(i,j) &= \{\mathbf{x}_i,\mathbf{x}_j\},\\
\mathcal{F}_{\text{multi}}(i,j) &=
\Big\{\mathbf{x}_i,\mathbf{x}_j,\,
\mathcal{N}^{\text{intra}}(v_i),\mathcal{N}^{\text{intra}}(v_j),\\
&\hspace{18pt}
\mathcal{N}^{\text{inter}}(v_i),\mathcal{N}^{\text{inter}}(v_j)\Big\}.
\end{aligned}
\vspace{-4pt}
\end{equation}

Let $\Psi^*(\mathcal{F})$ denote the minimum achievable expected discovery risk under feature set $\mathcal{F}$:
\vspace{-2pt}
\[
\Psi^*(\mathcal{F}) \;=\; \inf_{g}\; \mathbb{E}\big[\ell(g(\mathcal{F}),Y_{ij})\big],
\]
where $g$ is any measurable scoring function and $\ell(\cdot)$ is a bounded loss (e.g., logistic or 0-1 loss).

\begin{restatable}{theorem}{bayesrisk}
\label{thm:bayes_risk}
For the two feature sets $\mathcal{F}_{\text{multi}}$ and $\mathcal{F}_{\text{single}}$ defined above,
the Bayes-optimal risk satisfies
\[
\Psi^*(\mathcal{F}_{\text{multi}})\ \le\ \Psi^*(\mathcal{F}_{\text{single}}).
\]
Moreover, if the added contexts are informative in the sense that
$\mathbb{P}(Y_{ij}=1\mid \mathcal{F}_{\text{multi}})\neq \mathbb{P}(Y_{ij}=1\mid \mathcal{F}_{\text{single}})$ with non-zero probability,
then the inequality is strict.
\end{restatable}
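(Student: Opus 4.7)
The plan is to prove the inequality by a standard lifting/data-processing argument, and then obtain strictness through a conditional-Jensen step on the Bayes-optimal posterior. The crucial structural observation is that $\mathcal{F}_{\text{single}}(i,j)\subseteq \mathcal{F}_{\text{multi}}(i,j)$ as collections of random variables, so the $\sigma$-algebra $\sigma(\mathcal{F}_{\text{single}})$ is coarser than $\sigma(\mathcal{F}_{\text{multi}})$. Everything else is a consequence of this inclusion.

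\textbf{Step 1 (monotonicity).} First I would argue that any measurable scoring function $g:\mathcal{F}_{\text{single}}\to\mathbb{R}$ admits a trivial measurable lift $\tilde{g}:\mathcal{F}_{\text{multi}}\to\mathbb{R}$ that simply discards the extra context variables, with identical expected loss:
$$\mathbb{E}\bigl[\ell(\tilde{g}(\mathcal{F}_{\text{multi}}),Y_{ij})\bigr]=\mathbb{E}\bigl[\ell(g(\mathcal{F}_{\text{single}}),Y_{ij})\bigr].$$
Taking the infimum over $g$ on the right and using that the infimum over the (strictly larger) class of functions measurable w.r.t.\ $\mathcal{F}_{\text{multi}}$ is at most the infimum over the lifted class, we immediately get $\Psi^*(\mathcal{F}_{\text{multi}})\le \Psi^*(\mathcal{F}_{\text{single}})$.

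\textbf{Step 2 (strict inequality).} For the strict part, I would express the Bayes risk in closed form via the posterior. Denote $\eta_{\text{multi}}=\mathbb{P}(Y_{ij}=1\mid \mathcal{F}_{\text{multi}})$ and $\eta_{\text{single}}=\mathbb{P}(Y_{ij}=1\mid \mathcal{F}_{\text{single}})$. By the tower property, $\eta_{\text{single}}=\mathbb{E}[\eta_{\text{multi}}\mid \mathcal{F}_{\text{single}}]$. For the logistic/cross-entropy loss, the Bayes risk reduces to $\mathbb{E}[H(\eta)]$ for a strictly concave generalized-entropy function $H$ (binary entropy). Applying conditional Jensen's inequality yields
$$\Psi^*(\mathcal{F}_{\text{single}})-\Psi^*(\mathcal{F}_{\text{multi}})=\mathbb{E}\bigl[H(\mathbb{E}[\eta_{\text{multi}}\mid \mathcal{F}_{\text{single}}])-H(\eta_{\text{multi}})\bigr]\ge 0,$$
with equality iff $\eta_{\text{multi}}=\eta_{\text{single}}$ almost surely. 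The informativeness hypothesis states that this equality fails on a set of positive probability, which, combined with the strict concavity of $H$, upgrades the inequality to strict.

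\textbf{Main obstacle.} The main subtlety is making the strictness argument uniformly cover both the logistic and $0/1$ loss cases mentioned in the setup. The Jensen argument above works verbatim for any strictly proper loss (logistic, squared, cross-entropy), but under $0/1$ loss the Bayes risk depends only on whether the posterior crosses $1/2$, so pointwise differences in $\eta$ that stay on the same side of $1/2$ do not translate into risk gaps. To handle $0/1$ loss cleanly I would either (i) strengthen the informativeness hypothesis to require that the sign of $\eta-1/2$ differs on a positive-probability event, or (ii) phrase the theorem for strictly proper losses and note that $0/1$ loss is covered as a limiting/boundary case via standard surrogate-risk bounds. The proof itself—lifting plus conditional Jensen—is essentially one paragraph once this technicality is settled.
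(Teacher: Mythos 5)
Your Step~1 is exactly the paper's argument: any predictor on $\mathcal{F}_{\text{single}}$ lifts to one on $\mathcal{F}_{\text{multi}}$ by discarding the extra context, so the infimum over the richer class can only be smaller. For strictness, however, you and the paper diverge. The paper simply asserts that if the conditional distribution of $Y_{ij}$ changes under the added context, then the Bayes \emph{decision rule} differs on a set of positive measure and hence the risk strictly decreases. Your route --- writing the Bayes risk as $\mathbb{E}[H(\eta)]$ for a generalized entropy $H$, using the tower property $\eta_{\text{single}}=\mathbb{E}[\eta_{\text{multi}}\mid\mathcal{F}_{\text{single}}]$, and invoking conditional Jensen with strict concavity --- is both different and strictly more careful. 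In particular, the ``obstacle'' you flag is a real one that the paper's proof silently skips: under $0/1$ loss the Bayes risk is $\mathbb{E}[\min(\eta,1-\eta)]$, which is piecewise linear rather than strictly concave, so a posterior shift that stays on one side of $1/2$ changes neither the decision rule nor the risk. The informativeness hypothesis as stated ($\eta_{\text{multi}}\neq\eta_{\text{single}}$ with positive probability) therefore does \emph{not} imply strict inequality for $0/1$ loss, contrary to what the paper's one-line justification suggests; it does imply it for strictly proper losses such as the logistic loss, exactly as your Jensen argument shows. Your proposed fixes (either strengthen the hypothesis to require the posterior to cross $1/2$ on a positive-probability event, or restrict the strictness claim to strictly proper losses) are the right ways to close this; the paper would benefit from the same qualification.
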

\vspace{-2pt}

\vspace{-2pt}
\begin{restatable}{theorem}{priorspecial}
\label{thm:prior_special}
Consider our framework that learns column embeddings via a model $f_\theta$ operating on the constructed hypergraph.
DeepJoin and Snoopy can be viewed as special cases obtained by restricting the utilized context to
$\mathcal{F}_{\text{single}}$ (i.e., column-wise encoding without using $\mathcal{N}^{\text{intra}}$ or $\mathcal{N}^{\text{inter}}$).
\end{restatable}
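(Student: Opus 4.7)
The plan is to prove Theorem~\ref{thm:prior_special} by exhibiting an explicit choice of hypergraph structure and parameter setting under which our model $f_\theta$ becomes functionally equivalent to the column-wise encoders used by DeepJoin and Snoopy, thereby realizing the restriction of the accessible feature set from $\mathcal{F}_{\text{multi}}$ to $\mathcal{F}_{\text{single}}$. The high-level strategy is therefore to identify a degenerate instantiation of the hypergraph that kills all neighborhood-based information flow, and then to show that the remaining column-wise pipeline is expressive enough to subsume each baseline's encoder.

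First, I would formalize what it means for $f_\theta$ to depend only on $\mathcal{F}_{\text{single}}$: the embedding $\mathbf{h}_i$ of column $v_i$ must be a function of $\mathbf{x}_i$ alone. I would then construct such an instance by setting the incidence matrix $\boldsymbol{\Pi}$ so that every hyperedge is a singleton $\{v_i\}$, which forces $\mathcal{N}^{\text{intra}}(v_i)=\mathcal{N}^{\text{inter}}(v_i)=\emptyset$ for every node. Under this choice, both the Local Hyperedge Aggregation step and the Global Hyperedge Mixing step reduce to the identity on $\mathbf{x}_{v_i}^v$ up to a trainable linear map, since each node participates only in its own singleton hyperedge and no other node contributes to its aggregated signal. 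Consequently, the hierarchical message passing collapses to a pointwise transformation of the initial feature, which is exactly the regime of $\mathcal{F}_{\text{single}}$.

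Second, I would match the remaining column-wise pipeline to each baseline. For DeepJoin, I would specify the initial text encoder to be DistilBERT applied to the textual serialization of the column content and metadata, and concentrate the softmax fusion weights $\mathbf{w}$ entirely on the textual component, recovering DeepJoin's column encoder. For Snoopy, I would additionally absorb the proxy-column mechanism into the final linear projection $\mathbf{W}_0$, so that the resulting mapping reproduces Snoopy's rank-aware column representation. Finally, since no hyperedge-level signal survives under the degenerate incidence matrix, the training objective contributes only pairwise terms between columns, which can be instantiated as the contrastive losses used in DeepJoin and Snoopy, respectively.

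The main obstacle is that the notion of ``special case'' is informal and admits either an architectural or an information-theoretic reading; I would resolve this by stating the reduction architecturally, i.e., as the existence of a hypergraph configuration and parameter choice making $f_\theta$ functionally equivalent to each baseline, and then noting as a corollary of Theorem~\ref{thm:bayes_risk} that any model restricted to $\mathcal{F}_{\text{single}}$ cannot outperform the Bayes-optimal predictor on $\mathcal{F}_{\text{multi}}$. A secondary subtlety is handling baseline-specific components (e.g., Snoopy's proxy columns and rank-aware objective) without introducing new hyperedge structure, which I would address by showing that these pieces act only on the per-column embedding and can therefore be folded into the projection layer and loss without altering the underlying feature set.
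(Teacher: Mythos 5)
Your proposal follows essentially the same route as the paper's proof: instantiate the hypergraph with $N$ singleton hyperedges so that Local Hyperedge Aggregation and Global Hyperedge Mixing collapse to per-column transformations, yielding $\mathbf{h}_i=\phi_\theta(\mathbf{x}_i)$, and then identify $\phi_\theta$ with DeepJoin's PLM encoder and Snoopy's proxy-based transformation. Your additional remarks on folding the proxy mechanism into the projection layer and on the architectural versus information-theoretic reading of ``special case'' are refinements of, not departures from, the paper's argument.
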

\vspace{-2pt}
\subsection{\modelname~Model Design}
\label{sec:model_design}

\label{sec:hypergraph_network}
\label{sec:sf_hgn}

We present our \modelname, which operates on the constructed hypergraph to learn expressive column representations.
Our architecture consists of three components: (1) \textit{Positional Encoding}, (2) \textit{Local Hyperedge Aggregation}, and (3) \textit{Global Hyperedge Mixing}. 


\label{sec:positional_encoding}

\vspace{2pt}
\noindent\textbf{Positional Encoding.}
Standard message-passing graph neural networks (GNNs) suffer from position agnosticism~\cite{he2023generalization}, where nodes with identical local neighborhoods may receive identical representations even if they play different global roles in the joinability graph.
This limitation is formally characterized by the \emph{1-dimensional Weisfeiler--Lehman (1-WL) test}~\cite{weisfeiler1968reduction}.
We inject two types of positional encodings to mitigate this issue: (i) a \emph{table-level positional encoding} and (ii) a \emph{column-level Laplacian positional encoding}.


\noindent \textit{Table-level positional encoding.}
We maintain a learnable lookup matrix $\mathbf{E}_{\text{tbl}} \in \mathbb{R}^{|\mathcal{T}| \times d}$, where each row stores a trainable $d$-dimensional vector for one table ID.
For a column $v$ belonging to table $t(v)$, we inject table identity by
\begin{equation}
\label{eq:table_pe}
\mathbf{h}_v^{\text{tbl-pe}} = \mathbf{x}_v^v + \alpha \cdot \mathbf{E}_{\text{tbl}}[t(v)],
\end{equation}
where $\alpha$ is a learnable scalar weight (initialized as $0.1$) and optimized jointly with the model.

\noindent \textit{Column-level positional encoding.}
We adopt Laplacian positional encoding to capture global structural roles of columns in the joinability topology.
We construct a pairwise column graph $\mathcal{G}_{\text{col}}=(\mathcal{V},\mathcal{P}_{\text{join}})$, where each edge $(v_i,v_j)\in\mathcal{P}_{\text{join}}$ indicates that the corresponding column pair is joinable.
We build the symmetric adjacency matrix $\mathbf{A}\in\{0,1\}^{N\times N}$ by setting $A_{ij}=A_{ji}=1$ iff $(v_i,v_j)\in\mathcal{P}_{\text{join}}$, and $0$ otherwise.
We compute the normalized Laplacian
\begin{equation}
\label{eq:laplacian}
    \mathbf{L}_{\text{norm}} = \mathbf{I}_N - \mathbf{D}^{-1/2} \mathbf{A} \mathbf{D}^{-1/2},
\end{equation}
where $\mathbf{I}_N$ is the $N\times N$ identity matrix and $\mathbf{D}$ is the diagonal degree matrix.
We extract the smallest $k$ eigenvectors $\{\mathbf{v}_1,\ldots,\mathbf{v}_k\}$ and project them through a two-layer MLP:
\begin{equation}
\label{eq:lap_pe_mlp}
    \mathbf{PE}_{\text{col}}(v_i) =
    \mathbf{W}_2 \cdot \sigma\!\Big(\mathbf{W}_1 \cdot [\mathbf{v}_1[i],\ldots,\mathbf{v}_k[i]]^\top\Big).
\end{equation}
where $\mathbf{W}_1 \in \mathbb{R}^{d/2 \times k}$ and $\mathbf{W}_2 \in \mathbb{R}^{d \times d/2}$ are learnable projection matrices and $\sigma(\cdot)$ is a non-linear activation (e.g., ReLU).
This encoding captures global spectral structure of the joinability graph.
For efficiency, we precompute the eigenvectors offline and cache $\mathbf{PE}_{\text{col}}$ for lookup during training.

\noindent \textit{Unified positional encoding.}
The final positional encoding is:
\begin{equation}
\label{eq:unified_pe}
\mathbf{h}_v = \mathbf{x}_v^v + \alpha \cdot \mathbf{E}_{\text{tbl}}[t(v)] + \beta \cdot \mathbf{PE}_{\text{col}}(v),
\end{equation}
where $\mathbf{x}_v^v$ is the raw initial column embedding,
$\mathbf{E}_{\text{tbl}}[t(v)]$ is the table-level embedding for the table containing column/node $v$, $\mathbf{PE}_{\text{col}}(v)$ is the Laplacian-based column positional encoding
and $\beta$ is a learnable scalar weight initialized identically to $\alpha$.
Together, they allow the model to automatically adjust the strength of each positional signal.

\label{sec:hyperedge_gnn}
\noindent\textbf{Local Hyperedge Aggregation.}
Traditional GNNs suffer from over-squashing~\cite{deac2022expander,arnaiz2022diffwire}, where information from distant nodes gets exponentially compressed through multiple layers.
We adopt a hierarchical hyperedge-based approach to aggregate both the local and global information: by first aggregating columns into $M$ hyperedges ($M \ll N$) and performing message passing at the hyperedge level, we shorten path lengths and alleviate information bottlenecks.

\noindent \textit{Stage 1: Node-level feature transformation.}
Given PE-enhanced embeddings $\{\mathbf{h}_v^{(0)}\}_{v \in \mathcal{V}}$, we apply $L=2$ layers of node transformation:
\begin{equation}
\label{eq:gnn_layer}
    \mathbf{h}_v^{(\ell)} = \text{LayerNorm}(\sigma(\mathbf{W}^{(\ell)} \mathbf{h}_v^{(\ell-1)} + \mathbf{b}^{(\ell)})),
\end{equation}
where $\sigma(\cdot)$ is ReLU activation and $\mathbf{h}_v^{(0)} = \mathbf{h}_v$.
Unlike traditional GNN layers that aggregate neighbor information at this stage, we defer aggregation to Stage 2 at the hyperedge level.

\noindent \textit{Stage 2: Aggregation to hyperedge level.}
We aggregate node embeddings to the hyperedge level via mean pooling:
\begin{equation}
\label{eq:hyperedge_aggregation}
    \mathbf{x}_{e_j}^e = \frac{1}{|{e}_j|} \sum_{v \in {e}_j} \mathbf{h}_v^{(L)},
\end{equation}
where ${e}_j \subseteq \mathcal{V}$ denotes the node set of the $j$-th hyperedge.
Let $\mathbf{H}\in\mathbb{R}^{N\times d}$ stack node embeddings row-wise, i.e., $\mathbf{H}[v,:]=\mathbf{h}_v^{(L)}$,
and let $\mathbf{X}^e\in\mathbb{R}^{M\times d}$ stack hyperedge embeddings row-wise, i.e., $\mathbf{X}^e[j,:]=\mathbf{x}_{e_j}^e$.
This is efficiently computed via sparse matrix multiplication:
\begin{equation}
\mathbf{X}^e = \mathbf{D}_e^{-1}\boldsymbol{\Pi}^\top \mathbf{H},
\end{equation}
where $\boldsymbol{\Pi}\in\{0,1\}^{N\times M}$ is the node--hyperedge incidence matrix and
$\mathbf{D}_e \in \mathbb{R}^{M\times M}$ is diagonal with $(\mathbf{D}_e)_{jj}=\sum_{i=1}^{N} \boldsymbol{\Pi}_{ij}=|{e}_j|$.
Each node belongs to at most two hyperedges (one intra-table and at most one inter-table), making $\boldsymbol{\Pi}$ very sparse with time complexity
$O(\|\boldsymbol{\Pi}\|_0 \cdot d)$ where $\|\boldsymbol{\Pi}\|_0 \leq 2N$.

We denote the aggregated hyperedge embeddings as $\mathbf{Z}^{(0)}=\mathbf{X}^e$ and partition them by hyperedge type.
Inter-table and intra-table hyperedges capture fundamentally different relationships: cross-table join semantics versus intra-table schema semantics.
We apply separate linear transformations to learn domain-specific patterns (lowercase = single embedding, uppercase = stacked matrix):
\begin{align}
    \tilde{\mathbf{Z}}_{\text{inter}} &= \mathbf{Z}_{\text{inter}} \mathbf{W}_{\text{inter}}, \label{eq:transform_type1} \\
    \tilde{\mathbf{Z}}_{\text{intra}} &= \mathbf{Z}_{\text{intra}} \mathbf{W}_{\text{intra}}, \label{eq:transform_type2}
\end{align}
and concatenate them: $\tilde{\mathbf{Z}} = [\tilde{\mathbf{Z}}_{\text{inter}}; \tilde{\mathbf{Z}}_{\text{intra}}] \in \mathbb{R}^{M \times d}$.

\label{sec:mixer}
\noindent\textbf{Global Hyperedge Mixing.}
Although local message passing on the hyperedge graph captures short-range dependencies, we still need global information exchange across hyperedges to model long-range join relationships.
However, standard hypergraph message passing is inherently local (typically 1-hop per layer) and thus requires multiple layers to achieve global coverage, which increases computation and can exacerbate over-squashing.
We design a global mixing mechanism that achieves all-to-all communication in a single layer, inspired by MLP-Mixer~\cite{tolstikhin2021mlp}, while preserving hypergraph structural priors through structure-aware attention.


\noindent \textbf{Mixer Architecture.}
Our mixer consists of two sequential operations per layer: \HIname~(\HIabbr) across hyperedges and \HRname~(\HRabbr) within each hyperedge embedding.

For \HIabbr, we use structure-aware multi-head self-attention that combines content similarity with structural proximity.
To capture structural relationships, we first construct a hyperedge adjacency matrix $\mathbf{A}_{\text{hyperedge}} \in \mathbb{R}^{M \times M}$ where entry $[i,j]$ reflects the number of shared columns between hyperedges $i$ and $j$:
\begin{equation}
\label{eq:hyperedge_adjacency}
    \mathbf{A}_{\text{hyperedge}} = \text{RowNormalize}\!\left(\boldsymbol{\Pi}^T \boldsymbol{\Pi} - \text{diag}(\boldsymbol{\Pi}^T \boldsymbol{\Pi})\right).
\end{equation}
We inject this structural prior into multi-head self-attention via an additive bias term:
\begin{equation}
\label{eq:structure_aware_attention}
    \text{head}_i = \text{softmax}\left(\frac{\mathbf{Q}_i \mathbf{K}_i^T}{\sqrt{d_k}} + \lambda \cdot \mathbf{A}_{\text{hyperedge}}\right) \mathbf{V}_i,
\end{equation}
where $\mathbf{Q}_i, \mathbf{K}_i, \mathbf{V}_i$ are query, key, value matrices for head $i$, and $\lambda$ is a learnable scalar initialized to 0.5.
This allows hyperedges with high structural connection but low content similarity to still exchange information.
We use $h=8$ heads and combine outputs with residual connection:
$\mathbf{Z}_{\text{\HIabbr}} = \tilde{\mathbf{Z}} + \text{Concat}(\text{head}_1, \ldots, \text{head}_h) \mathbf{W}_O$.

For \HRabbr, we apply a two-layer MLP:
\begin{equation}
\label{eq:channel_mixer}
    {\mathbf{Z}}_{\text{out}} = {\mathbf{Z}}_{\text{\HIabbr}} + \mathbf{W}_2 \cdot \text{GELU}(\mathbf{W}_1 \cdot \text{LayerNorm}({\mathbf{Z}}_{\text{\HIabbr}})^T)^T,
\end{equation}
where $\mathbf{W}_1 \in \mathbb{R}^{4d \times d}$ and $\mathbf{W}_2 \in \mathbb{R}^{d \times 4d}$ expand and project back with factor 4.
We stack $L_{\text{mixer}}=2$ layers, with $\mathbf{A}_{\text{hyperedge}}$ shared across all the layers.

\noindent \textbf{Propagating Information Back to Columns.}
We propagate enriched hyperedge embeddings back to columns:
\begin{equation}
\label{eq:hyperedge_to_column}
    \mathbf{h}_v^{\text{hyperedge}} = \frac{1}{|\mathcal{N}(v)|} \sum_{j \in \mathcal{N}(v)} \mathbf{W}_{\text{h2c}} \tilde{\mathbf{z}_{e_j}^{(L_{\text{mixer}})}},
\end{equation}
where $\mathcal{N}(v) = \{j \mid \boldsymbol{\Pi}_{vj} = 1\}$ is the set of hyperedges containing column $v$, and $\mathbf{W}_{\text{h2c}} \in \mathbb{R}^{d \times d}$ adapts hyperedge embeddings to the column-level space.
The final column embedding combines original and hyperedge-enhanced information:
\begin{equation}
\label{eq_final_column_embeddings}
    \mathbf{h}_v^{\text{final}} = \text{L2Norm}(\text{LayerNorm}(\mathbf{h}_v + \mathbf{h}_v^{\text{hyperedge}})).
\end{equation}
The residual connection preserves original semantics while L2 normalization enables efficient cosine similarity search.

The overall forward process of \modelname{} is summarized in Algorithm~\ref{algo:sf_hgn} in the appendix of the online full version~\cite{hyperjoin_full}. The following theorem shows that our model is more expressive than 1-WL test. The proof is in the online full version~\cite{hyperjoin_full}.

\vspace{-2pt}
\begin{restatable}[\modelname\ is strictly more expressive than 1-WL message passing]{theorem}{expressiveness}
\label{thm:expressiveness}
Let $\mathcal{H}_{\text{base}}$ be the hypothesis class of permutation-equivariant
message-passing encoders operating on the joinability graph with raw node features
$\mathbf{h}^{(0)}$ (i.e., 1-WL-initialized inputs).
Let $\mathcal{H}_{\text{pe}}$ be the class obtained by augmenting inputs with
\begin{equation}
\label{eq:pe_def}
    \text{PE}(v)=\alpha\mathbf{E}_{\text{tbl}}[t(v)]+\beta\mathbf{PE}_{\text{col}}(v),
\end{equation}
and using an injective node-wise update (as in \modelname).
Assume there exist two nodes $u$ and $v$ such that $\mathbf{h}^{(0)}_u=\mathbf{h}^{(0)}_v$ and $u,v$ are 1-WL-indistinguishable under the raw initialization, but $\text{PE}(u)\neq \text{PE}(v)$.
Then $\mathcal{H}_{\text{base}} \subsetneq \mathcal{H}_{\text{pe}}$.
\end{restatable}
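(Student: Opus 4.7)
The plan is to decompose $\mathcal{H}_{\text{base}} \subsetneq \mathcal{H}_{\text{pe}}$ into two claims that I would handle independently: (i) the containment $\mathcal{H}_{\text{base}} \subseteq \mathcal{H}_{\text{pe}}$, and (ii) the strict separation, which amounts to exhibiting a function in $\mathcal{H}_{\text{pe}}$ that no encoder in $\mathcal{H}_{\text{base}}$ can realize.

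For the containment direction, I would show that every $f\in\mathcal{H}_{\text{base}}$ can be simulated inside $\mathcal{H}_{\text{pe}}$. The key observation is that the positional term in Equation~\eqref{eq:pe_def} vanishes when the learnable scalars satisfy $\alpha=\beta=0$, so the effective input of the \modelname~encoder collapses to the raw features $\mathbf{h}^{(0)}$. Since $\alpha,\beta$ are free parameters of the hypothesis class, this zero setting is admissible. The injective node-wise update required by $\mathcal{H}_{\text{pe}}$ can then reproduce any message-passing update in $\mathcal{H}_{\text{base}}$ by the standard universal-approximation construction (composing an injective multiset aggregator with a post-projection, in the spirit of Lemma~5 of Xu et al.). This places every base encoder in $\mathcal{H}_{\text{pe}}$ and establishes inclusion.

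For strict separation, I would invoke the hypothesized pair $(u,v)$: they share raw features $\mathbf{h}^{(0)}_u=\mathbf{h}^{(0)}_v$, are 1-WL-indistinguishable under these raw inputs, but satisfy $\text{PE}(u)\neq \text{PE}(v)$. Applying the classical 1-WL upper bound on message-passing networks (Morris et al.; Xu et al.), every $f\in\mathcal{H}_{\text{base}}$ produces $f(u)=f(v)$ at all layers, because equal starting colors and equal neighborhood multisets collapse under any permutation-equivariant aggregator. For \modelname, in contrast, the augmented inputs $\mathbf{h}^{(0)}_u+\text{PE}(u)$ and $\mathbf{h}^{(0)}_v+\text{PE}(v)$ are already distinct, and the injective node-wise update then produces distinct first-layer embeddings. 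A fixed linear readout converts this into a scalar function that separates $u$ from $v$; this function lies in $\mathcal{H}_{\text{pe}}\setminus \mathcal{H}_{\text{base}}$, which together with (i) proves strict containment.

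The main obstacle I anticipate is transferring the 1-WL upper bound from its discrete-color formulation to the continuous-valued, multi-layer encoders considered here: the argument depends on the refinement lemma that equal 1-WL colorings at initialization propagate, layer by layer, to equal representations under any permutation-equivariant message-passing operator. I would cite and adapt the continuous version established by Xu et al.~(2019) rather than reprove it. A smaller but genuine subtlety is the sign ambiguity of the Laplacian eigenvectors in Equation~\eqref{eq:lap_pe_mlp}: the separation $\text{PE}(u)\neq\text{PE}(v)$ must be well-defined, so the proof should either fix a deterministic sign convention during eigendecomposition or compose the spectral features with a sign-invariant MLP wrapper. Both are standard and do not threaten the overall argument.
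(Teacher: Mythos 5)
Your proposal is correct and follows essentially the same route as the paper's proof: containment via setting $\alpha=\beta=0$, and strict separation by observing that permutation-equivariant message passing must collapse the 1-WL-indistinguishable pair $(u,v)$ with identical raw features, while the distinct augmented inputs $\mathbf{h}^{(0)}+\text{PE}$ are preserved by the injective update and separated by a linear readout. The additional caveats you raise (the continuous-valued 1-WL refinement lemma and the eigenvector sign convention) are legitimate points the paper leaves implicit, but they do not change the argument.
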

\vspace{-2pt}

\noindent\textbf{Complexity Analysis.}
In Stage~1, computing the top-$k$ Laplacian eigenvectors is done offline (dense $O(N^3)$ for small graphs, or sparse iterative solvers for large/sparse graphs); online positional injection costs $O(Nd)$.
In Stage~2, the $L$-layer node-wise transformations cost $O(LNd^2)$, and pooling nodes to hyperedges via $\mathbf{X}^e=\mathbf{D}_e^{-1}\boldsymbol{\Pi}^\top \mathbf{H}$ costs $O(\|\boldsymbol{\Pi}\|_0 d)$.
In Stage~3, each mixer layer costs $O(M^2 d + M d^2)$, yielding $O(L_{\text{mixer}}(M^2 d + M d^2))$ in total.
Thus, the overall online complexity is
$O(LNd^2 + L_{\text{mixer}}(M^2 d + M d^2))$, with $\|\boldsymbol{\Pi}\|_0 \le 2N$ and typically $M \ll N$ (e.g., $M \approx 0.1N$--$0.3N$).

\label{sec:training_objectives}
\noindent\textbf{Training Objectives.}
We adopt a self-supervised training strategy without labels by splitting each table into two overlapping subsets, treating join-key columns as positive pairs (including LLM-generated name variants), and sampling negatives from non-key columns within the same split and across different tables.
Our training objective encourages the model to assign higher similarity to joinable column pairs than to non-joinable pairs by a margin.
Given final column embeddings $\mathbf{h}_{C}^{\mathrm{final}}$ produced by Algorithm~\ref{algo:sf_hgn}, we define the cosine similarity between columns $C_a$ and $C_b$ as
\begin{equation}
s(C_a, C_b) \;=\;
\frac{\mathbf{h}_{C_a}^{\mathrm{final}\,\top}\mathbf{h}_{C_b}^{\mathrm{final}}}
{\left\lVert \mathbf{h}_{C_a}^{\mathrm{final}} \right\rVert_2 \left\lVert \mathbf{h}_{C_b}^{\mathrm{final}} \right\rVert_2 }.
\end{equation}

Let $\mathcal{D}_{\mathrm{train}}^{+}$ denote the set of joinable (positive) pairs and $\mathcal{D}_{\mathrm{train}}^{-}$ denote the set of non-joinable (negative) pairs.
For each anchor column $C_i$, we sample a positive $C_i^{+}$ with $(C_i, C_i^{+})\in\mathcal{D}_{\mathrm{train}}^{+}$ and a negative $C_i^{-}$ with $(C_i, C_i^{-})\in\mathcal{D}_{\mathrm{train}}^{-}$.
We optimize the margin-based triplet ranking loss
\begin{equation}
\label{eq:triplet_loss}
\mathcal{L}_{\mathrm{triplet}}
\;=\;
\frac{1}{|\mathcal{D}_{\mathrm{train}}^{+}|}
\sum_{(C_i, C_i^{+}) \in \mathcal{D}_{\mathrm{train}}^{+}}
\Bigl[\, m + s(C_i, C_i^{-}) - s(C_i, C_i^{+}) \,\Bigr]_{+},
\end{equation}
where $[\cdot]_{+}=\max(\cdot,0)$ and $m$ is the margin hyperparameter.

\section{Online Ranking}
\label{sec:search}

We present the online ranking phase of \frameworkname, which efficiently retrieves coherent joinable columns from large-scale data lakes.
Section~\ref{sec:problem_formulation} discusses the 
objective and its computational complexity.
Then, Section~\ref{sec:solution} presents an efficient global coherent reranking module: we first retrieve a manageable candidate pool via similarity search and construct a candidate graph, and then perform reranking using a greedy algorithm with MST. 

\subsection{Hardness Analysis}
\label{sec:problem_formulation}

Given a query column $C_q$, an initial candidate pool $\mathcal{C}_B$ returned by a fast similarity search stage, and a target size $K$, our goal is to select a subset $\mathcal{R}^* \subseteq \mathcal{C}_B$ with $|\mathcal{R}^*|=K$ that is both highly relevant to $C_q$ and mutually joinable as a coherent result set.
To quantify relevance and coherence, we use a pairwise affinity score $w(\cdot,\cdot)$ over columns, where $w(C_i, C_j) \in \mathbb{R}^+$ reflects their joinability and semantic similarity.
A coherent result set should be both relevant to the query and well-connected internally.
A natural way to capture this trade-off is to score a size-$K$ set by the following objective:

\begin{equation}
\label{eq:mst_objective}
\mathcal{R}^* = \underset{\mathcal{R} \subseteq \mathcal{C}_B, |\mathcal{R}|=K}{\arg\max} \ G(\mathcal{R}),
\end{equation}
where the objective function $G(\mathcal{R})$ is:
\begin{equation}
\label{eq:objective_function}
    G(\mathcal{R}) = {\sum_{C \in \mathcal{R}} w(C_q, C)} + \lambda \cdot {\text{Coherence}(\mathcal{R})}
\end{equation}
The first term aggregates individual query relevance, while the second term measures how tightly the selected columns are interconnected.
The hyperparameter $\lambda \geq 0$ controls the trade-off between relevance and coherence.

Selecting the optimal $K$-column subset under Eq.~\ref{eq:mst_objective} is a combinatorial optimization problem.
The following theorem presents the hardness analysis for solving such problem. 
The proof is in the online full version~\cite{hyperjoin_full}.

\vspace{-2pt}
\begin{restatable}[NP-hardness of Coherence-Aware Selection]{theorem}{nphard}
\label{thm:np_hard_formulation}
Maximizing the objective $G(\mathcal{R})$ in Eq.~\ref{eq:mst_objective} under the cardinality constraint $|\mathcal{R}|=K$ is NP-hard when $\text{Coherence}(\mathcal{R})$ is instantiated as the weight of a maximum spanning tree rooted at $C_q$ on the induced subgraph over $\{C_q\}\cup\mathcal{R}$.
\end{restatable}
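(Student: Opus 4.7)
The plan is to establish NP-hardness by polynomial-time reduction from the MAX $k$-CARDINALITY TREE problem (MAX $k$-CT), which, given a positively weighted graph $(G=(V,E),w)$ and an integer $k$, asks for a subtree of $G$ with exactly $k$ edges of maximum total weight. MAX $k$-CT is NP-hard: this follows from the classical hardness of MIN $k$-CT due to Fischetti, Hamacher, J{\o}rnsten and Maffioli, via the standard weight-flip $w'(e)=M-w(e)$ with $M>\max_e w(e)$, which preserves positivity while turning minimization into maximization.

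Given a MAX $k$-CT instance, I would construct a Coherence-Aware Selection instance as follows: take $\mathcal{C}_B=V$, $K=k+1$, $\lambda=1$, introduce a fresh dummy query $C_q$, and set $w'(u,v)=w(u,v)$ for each $(u,v)\in E$ and $w'(u,v)=\epsilon$ for every other pair (including all pairs involving $C_q$), where $\epsilon$ is a polynomially small positive constant such as $\epsilon<\min_{e\in E}w(e)/n^{3}$. The equivalence of optima then decomposes into three observations. First, the relevance term $\sum_{C\in\mathcal{R}}w'(C_q,C)=(k+1)\epsilon$ is a constant independent of $\mathcal{R}$, so the problem collapses to maximizing the max-spanning-tree weight on $\{C_q\}\cup\mathcal{R}$, which has $K+1=k+2$ vertices and hence $k+1$ tree edges. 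Second, since every edge incident to $C_q$ carries the minimum weight $\epsilon$, the MST attaches $C_q$ via exactly one $\epsilon$-edge, and the remaining $k$ tree edges form a spanning tree of the induced subgraph on $\mathcal{R}$. Third, the small $\epsilon$ ensures that any optimal $\mathcal{R}$ must induce a connected subgraph in $G$ over $E$-edges, because every $\epsilon$-bridge substituted for a genuine $E$-edge loses at least $\min_{e\in E}w(e)-\epsilon$ in the MST weight. Consequently the MST on $\mathcal{R}$ equals the maximum spanning tree of $G[\mathcal{R}]$, i.e., a subtree of $G$ with exactly $k$ edges, and conversely any MAX $k$-CT subtree's vertex set is a feasible $\mathcal{R}$ attaining the same MST weight up to the additive constant $(k+1)\epsilon$.

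The main obstacle will be calibrating $\epsilon$ so that the reduction remains polynomial-time while strictly excluding disconnected-$\mathcal{R}$ configurations as spurious optima. This is resolved by choosing $\epsilon$ inverse-polynomially small: the resulting weights remain representable in polynomial bits, yet even a single $\epsilon$-bridge in the MST induces a loss that strictly dominates any achievable gain from another choice of $\mathcal{R}$. A secondary subtlety is that the theorem specifies a spanning tree \emph{rooted at} $C_q$, but since $C_q$ attaches through a unique $\epsilon$-edge and rooted spanning trees of a connected graph coincide with unrooted ones as edge sets, this rooting convention has no effect on the argument. A final remark: because the relevance term is forced to be constant in the construction, the NP-hardness is intrinsic to the MST-based coherence term rather than to the interplay with the linear relevance term.
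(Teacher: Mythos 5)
Your reduction has a genuine gap at your third observation. With your weights, Kruskal's rule decomposes the maximum spanning tree of the complete graph on $\{C_q\}\cup\mathcal{R}$ into (i) a maximum spanning forest of $G[\mathcal{R}]$ using only $E$-edges, with $c(\mathcal{R})$ components and weight $F(\mathcal{R})$, plus (ii) $c(\mathcal{R})$ bridging $\epsilon$-edges. Your objective is therefore $(k+1)\epsilon + F(\mathcal{R}) + c(\mathcal{R})\,\epsilon$, so for small $\epsilon$ you are really maximizing the induced spanning-\emph{forest} weight over $(k{+}1)$-vertex sets, not the $k$-edge subtree weight. Your substitution argument compares the MST of a fixed disconnected $\mathcal{R}$ against what it ``would be'' were $\mathcal{R}$ connected; what is actually needed is that some \emph{connected} $\mathcal{R}'$ attains at least as large a value, and this can fail outright. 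Take $G$ to be two vertex-disjoint triangles with edge weight $100$, joined by a long path of weight-$1$ edges, and $k+1=6$: choosing the six triangle vertices is disconnected and gives $F=400$, whereas every connected $6$-vertex induced subgraph contains at most one triangle and its maximum spanning tree weighs at most $200+3=203$. On such instances the optimum of your constructed problem is attained by a disconnected set and carries no information about the optimum of the maximum $k$-cardinality tree instance, so the reduction as stated does not establish hardness.

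The repair is standard and local: instead of shrinking the non-edges, additively inflate the $E$-edges, setting $w'(e)=M+w(e)$ for $e\in E$ with $M>(k+1)\max_{e}w(e)$ and keeping $w'=\epsilon$ elsewhere. A connected selection then scores $kM+(\cdot)$ while any selection with $c\ge 2$ components scores at most $(k-1)M+k\max_e w(e)+(k+1)\epsilon$, so connectivity is forced and the residual optimization is exactly the maximum $k$-cardinality tree problem (whose NP-hardness via the weight-flip from the Fischetti et al.\ minimization version you invoke correctly). With that fix your route is valid and genuinely different from the paper's: the paper instead reduces from $0$--$1$ Knapsack, encoding each item as a path of $w_i$ candidate nodes whose value $v_i$ sits on the final edge, so that realizing a value forces the selection of $w_i$ nodes and the cardinality budget $K=W$ plays the role of the knapsack capacity.
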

\vspace{-4pt}
\subsection{Global Coherent Search}
\label{sec:solution}

Given a query column $C_q$, the offline phase provides a structure-aware embedding $h_C$ for every column $C$ in the data lake.
At query time, we aim to solve the coherence-aware top-$K$ selection objective in Eq.~\ref{eq:mst_objective}.
Theorem~\ref{thm:np_hard_formulation} shows that selecting the best size-$K$ subset is NP-hard.
We therefore adopt a two-stage framework that balances scalability and effectiveness:(1) retrieve a manageable candidate pool and build a weighted candidate graph; and (2) perform MST-based greedy reranking to return a globally coherent top-$K$ set.

\noindent\textbf{Candidate Graph Construction.}
\label{sec:candidate_graph}
Given a query column $C_q$, the data lake contains $N$ total columns.
To avoid the intractable search space $\binom{N}{K}$, we first retrieve a manageable candidate pool of size $B$ (where $K < B \ll N$) via efficient similarity search, and then perform coherence-aware optimization only within this pool, yielding a candidate set $\mathcal{C}_B$.

\begin{definition}[Candidate Graph]
\label{def:candidate_graph}
Given a query column $C_q$ and a candidate set $\mathcal{C}_B$ obtained via similarity search (typically $|\mathcal{C}_B| = B > K$), the candidate graph is defined as a weighted undirected graph $\mathcal{G}_C = (\mathcal{V}_C, \mathcal{E}_C, w)$ where $\mathcal{V}_C = \{C_q\} \cup \mathcal{C}_B$ contains the query column and all candidate columns, and the edge set $\mathcal{E}_C = \mathcal{E}_{query} \cup \mathcal{E}_{cand}$ consists of query-to-candidate edges and candidate-to-candidate edges.
The weight function $w: \mathcal{E}_C \to \mathbb{R}^+$ assigns each edge a positive weight:
\begin{equation}
    w(C_i, C_j) = \text{sim}(h_{C_i}, h_{C_j}),
\end{equation}
where $h_{C_i}, h_{C_j}$ are the structure-aware column embeddings learned by \modelname~(Section~\ref{sec:sf_hgn}).
\end{definition}

The candidate graph is constructed through the following steps.
First, for a given query column $C_q$, we perform efficient similarity search over all columns in the data lake based on cosine similarity between their learned embeddings.
This yields a ranked list of the Top-$B$ most relevant candidates, $\mathcal{C}_B = \{C_1, C_2, \ldots, C_B\}$, ensuring high recall.
Second, we establish query-to-candidate edges $\mathcal{E}_{query} = \{(C_q, C) \mid C \in \mathcal{C}_B\}$ connecting the query column to all candidates, with edge weights $w(C_q, C_i) = \text{sim}(h_{C_q}, h_{C_i})$ reflecting the relevance of each candidate to the query.
Third, to capture the inter-candidate coherence, we establish candidate-to-candidate edges $\mathcal{E}_{cand} = \{(C_i, C_j) \mid C_i, C_j \in \mathcal{C}_B, i \neq j\}$ between all pairs of candidates, with edge weights $w(C_i, C_j) = \text{sim}(h_{C_i}, h_{C_j})$ reflecting the joinability and semantic similarity between the two candidates.
This stage reduces the search space from $\binom{N}{K}$ to $\binom{B}{K}$ by restricting selection to the top-$B$ candidate while preserving high recall.

\noindent\textbf{MST-based Greedy Reranking.}
\label{sec:mst_greedy_reranking}
With the candidate graph $\mathcal{G}_C$ built, the core challenge is to efficiently select a set $\mathcal{R} \subseteq \mathcal{C}_B$ that maximizes the objective $G(\mathcal{R})$ in Eq.~\ref{eq:mst_objective}.
This requires instantiating the coherence term, $\text{Coherence}(\mathcal{R})$, which quantifies how well the selected columns can be integrated for joint analysis.
A seemingly straightforward way to enforce coherence is to require the returned columns to form a dense clique, i.e., every pair is strongly joinable.
This assumption is overly restrictive for real data lakes where join graphs are typically sparse and multi-hop joins through bridge tables are common.

Instead, coherence should ensure that every column in $\mathcal{R}$ is reachable from the query $C_q$ through a sequence of strong join relationships.
This notion is naturally quantified by the weight of a \emph{maximum spanning tree} (MST) constructed on the induced subgraph over ${C_q} \cup \mathcal{R}$.
The MST extracts the set of strongest possible edges that connect all columns without cycles, ensuring reachability from the query anchor to every result while accommodates multi‑hop joins common in real data lakes, and avoids the over‑strictness of dense subgraph measures. 
Formally, for a vertex set $\mathcal{V}' = {C_q} \cup \mathcal{R}$, let $\mathcal{G}_C[\mathcal{V}']$ denote the subgraph induced by $\mathcal{V}'$ in $\mathcal{G}_C$.
The coherence of $\mathcal{R}$ is defined as the weight of a MST of $\mathcal{G}_C[\mathcal{V}']$:
\begin{equation}
\label{eq:coherence_def}
\text{Coherence}(\mathcal{R}) = \max_{T \in \mathcal{S}(\mathcal{V}')} \sum_{(u,v) \in E(T)} w(u, v),
\end{equation}
where $\mathcal{S}(\mathcal{V}')$ is the set of all spanning trees of $\mathcal{G}_C[\mathcal{V}']$, and $E(T)$ denotes the edge set of tree $T$.

Evaluating $G(\mathcal{R})$ by explicitly constructing an MST for each candidate subset $\mathcal{R}$ is computationally prohibitive for online search.
We therefore propose a greedy algorithm that cleverly combines the logic of Prim's MST construction with subset selection, avoiding the need to recompute an MST from scratch at each iteration.
The algorithm incrementally builds the result set $\mathcal{R}$ by simulating the growth of an MST \emph{rooted at the query $C_q$} as follows:
\begin{itemize}[leftmargin=10pt, topsep=1pt]
\item \textbf{Initialization:} Start with $\mathcal{R}=\emptyset$ and define the current connected component is $S=\{C_q\}$.
Since $S$ initially contains only the query node, the coherence gain term for any candidate $C$ is simply $w(C, C_q)$; hence the first selected node is the one with the highest query relevance.
\item \textbf{Selection rule:} At each step, for every candidate $C \in \mathcal{C}_B \setminus \mathcal{R}$, compute the marginal gain
\begin{equation}
\label{eq:marginal_gain}
\Delta(C \mid \mathcal{R}) = w(C_q, C) + \lambda \cdot \max_{C' \in S} w(C, C'),
\end{equation}
where $S = \{C_q\} \cup \mathcal{R}$. The first term captures the relevance to the query, while the second term measures the strongest possible connection between $C$ and the current component $S$—this mirrors the cut step in Prim’s algorithm, which attaches a new node via the heaviest edge crossing to the growing tree. The algorithm then selects the candidate
\[
    C^* = \arg\max_{C \in \mathcal{C}_B \setminus \mathcal{R}} \Delta(C \mid \mathcal{R}).
\]
\item \textbf{Maintenance:} Update the result set $\mathcal{R} \leftarrow \mathcal{R} \cup \{C^*\}$ and the connected component $S \leftarrow S \cup \{C^*\}$. This incremental expansion ensures that each newly added node is linked to the existing structure by the strongest available edge, thereby progressively building a high‑weight connectivity backbone. Repeat the selection and maintenance steps until $|\mathcal{R}| = K$.
\end{itemize}
This iterative process ensures that the final set $\mathcal{R}$ is connected to $C_q$ via a tree composed of the strongest edges chosen at each step. The sequence of $\max$ operations implicitly constructs a connectivity backbone rooted at $C_q$ that approximates the MST gain; the exact MST-based coherence of the final set can be obtained by running a MST algorithm on the induced subgraph $\mathcal{G}_C[\{C_q\}\cup\mathcal{R}]$ if needed, without requiring recomputation for every intermediate subset during the online search.

\begin{algorithm}[t]
\caption{Global Coherent Search}
\label{algo:coherence_aware_search}
\LinesNumbered
\DontPrintSemicolon
\KwIn{Query column $C_q$, Embedding DB $\{h_C \mid C \in \mathcal{C}\}$, Target size $K$, Candidate size $B$, Coherence weight $\lambda$.}
\KwOut{Final result set $\mathcal{R}$.}

\tcp{\textbf{Phase 1: Candidate Graph Construction}}
$\mathcal{C}_B \leftarrow \text{Top-$B$ similarity search using } h_{C_q}$ \;
Construct candidate graph $\mathcal{G}_C = (\{C_q\} \cup \mathcal{C}_B, E_C, w)$ \;
Compute edge weights $w(C_i, C_j) \leftarrow \text{sim}(h_{C_i}, h_{C_j})$ for edges \;

\tcp{\textbf{Phase 2: MST-based Greedy Reranking}}
$\mathcal{R} \leftarrow \emptyset$ \;
$U \leftarrow \mathcal{C}_B$ \tcp*{unselected candidates}
\ForEach{$C \in U$}{
    $\mathrm{rel}[C] \leftarrow w(C_q, C)$ \;
    $\mathrm{key}[C] \leftarrow w(C_q, C)$ \tcp*{best attachment to $S=\{C_q\}$}
}
\For{$i=1,\ldots,K$}{
    \tcp{Select candidate maximizing marginal gain}
    $C^* \leftarrow \arg\max_{C \in U} \big(\mathrm{rel}[C] + \lambda\cdot \mathrm{key}[C]\big)$ \;
    $\mathcal{R} \leftarrow \mathcal{R} \cup \{C^*\}$ \;
    $U \leftarrow U \setminus \{C^*\}$ \;
    \tcp{Update keys for remaining candidates}
    \ForEach{$C \in U$}{
        \If{$w(C, C^*) > \mathrm{key}[C]$}{
            $\mathrm{key}[C] \leftarrow w(C, C^*)$ \;
        }
    }
}
\Return $\mathcal{R}$ \;
\end{algorithm}

\noindent\textbf{Algorithm Description.}
The complete global coherent search procedure is summarized in Algorithm~\ref{algo:coherence_aware_search}, which integrates both candidate graph construction and MST-based reranking. 
Given a query column $C_q$, the algorithm first retrieves the Top-$B$ most relevant candidates via similarity search and constructs a weighted candidate graph $\mathcal{G}_C$ (lines 1–3). 
In the reranking phase (lines 4–16), the algorithm grows a result set $\mathcal{R}$ by iteratively selecting the candidate with the highest marginal gain, as defined in Eq.~\ref{eq:marginal_gain}. This gain combines the candidate's relevance to $C_q$ with its strongest connection to the current tree (including $C_q$ itself), thereby simulating a Prim-style expansion of a maximum spanning tree rooted at the query. The process continues until $K$ columns are selected, yielding a coherent result set optimized for both relevance and internal joinability.

\noindent\textbf{Approximation Guarantee.}
We provide a theoretical justification for the greedy reranking algorithm. 
We denote by $\Delta(C\mid\mathcal{R})$ the surrogate marginal gain defined in Eq.~\ref{eq:marginal_gain}.
We prove that the greedy rule in Eq.~\ref{eq:marginal_gain} optimizes a tractable surrogate whose marginal gain provably lower-bounds the true marginal gain contributed by the MST-based coherence term.
The proof is in the online full version~\cite{hyperjoin_full}.

\begin{restatable}[MST Gain Lower Bound]{lemma}{mstlowerbound}
\label{lem:mst_lower_bound}
For any $S\ni C_q$ and $v\notin S$, let $\text{MST}(S)$ be the MaxST weight on $\mathcal{G}_C[S]$. Then
\[
\text{MST}(S\cup\{v\})-\text{MST}(S)\ \ge\ \max_{u\in S} w(v,u).
\]
\end{restatable}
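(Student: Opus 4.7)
The plan is to exhibit a concrete spanning tree of $\mathcal{G}_C[S\cup\{v\}]$ whose weight already meets the claimed lower bound, and then invoke the maximality of the MaxST on the larger vertex set. Since $\text{MST}(\cdot)$ denotes the \emph{maximum} spanning-tree weight, any single spanning tree we can write down gives a valid lower bound on $\text{MST}(S\cup\{v\})$, which is exactly the direction the inequality asks for.

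First, I fix an arbitrary maximum spanning tree $T^\star$ of the induced subgraph $\mathcal{G}_C[S]$, so that $\mathrm{weight}(T^\star)=\text{MST}(S)$. Because the candidate graph $\mathcal{G}_C$ is complete on $\{C_q\}\cup\mathcal{C}_B$ by the construction in Section~\ref{sec:candidate_graph}, every edge $(v,u)$ with $u\in S$ is present with non-negative weight $w(v,u)=\mathrm{sim}(h_v,h_u)$. Let $u^\star=\arg\max_{u\in S} w(v,u)$. I then form the tree $T'$ on vertex set $S\cup\{v\}$ by taking all edges of $T^\star$ and appending the single edge $(v,u^\star)$. Adding a new leaf to a spanning tree of $S$ preserves the tree property and the spanning property, so $T'$ is indeed a spanning tree of $\mathcal{G}_C[S\cup\{v\}]$, and its total weight decomposes additively:
\[
\mathrm{weight}(T') \;=\; \mathrm{weight}(T^\star) + w(v,u^\star) \;=\; \text{MST}(S) + \max_{u\in S} w(v,u).
\]

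Since $\text{MST}(S\cup\{v\})$ is, by definition, the maximum of $\mathrm{weight}(T)$ over \emph{all} spanning trees $T$ of $\mathcal{G}_C[S\cup\{v\}]$, it dominates $\mathrm{weight}(T')$, yielding
\[
\text{MST}(S\cup\{v\}) \;\ge\; \text{MST}(S) + \max_{u\in S} w(v,u),
\]
which rearranges to the claimed bound. No part of this argument looks delicate: the only subtlety is ensuring that $T'$ is truly a spanning tree (handled by the leaf-attachment observation) and that the ambient graph is complete so $u^\star$ is reachable from $v$ by a single edge of weight $\max_{u\in S} w(v,u)$, which follows directly from Definition~\ref{def:candidate_graph}. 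I expect the writeup to be essentially self-contained in a few lines; the main conceptual point to emphasize is the asymmetry relative to \emph{minimum} spanning trees, where a symmetric statement would fail, and the fact that this lower bound is precisely what makes $\Delta(C\mid\mathcal{R})$ a valid surrogate for the true MST-based marginal coherence gain used by the greedy rule in Eq.~\ref{eq:marginal_gain}.
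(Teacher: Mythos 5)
Your proposal is correct and is essentially identical to the paper's proof: both take a MaxST $T^\star$ of $\mathcal{G}_C[S]$, attach $v$ as a leaf via the heaviest edge $(v,u^\star)$ to obtain a spanning tree of $\mathcal{G}_C[S\cup\{v\}]$, and conclude by maximality of $\text{MST}(S\cup\{v\})$. The extra remarks on completeness of the candidate graph and leaf attachment are fine but not a different argument.
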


\vspace{-8pt}
\begin{restatable}[Surrogate Marginal Gain Property]{theorem}{surrogateguarantee}
\label{thm:surrogate_guarantee}
Let $S=\{C_q\}\cup\mathcal{R}$ and let $C^*$ be the candidate selected by maximizing the surrogate marginal gain $\Delta(C\mid\mathcal{R})$ in Eq.~\ref{eq:marginal_gain}.
Then
\[
G(\mathcal{R}\cup\{C^*\})-G(\mathcal{R})\ \ge\ \Delta(C^*\mid\mathcal{R}).
\]
Moreover, if $u_t\in\arg\max_{u\in S_{t-1}} w(C^{(t)},u)$, then $\{(C^{(t)},u_t)\}_{t=1}^K$ forms a spanning tree over $\{C_q\}\cup\mathcal{R}$ and
\[
\text{Coherence}(\mathcal{R})=\text{MST}(\{C_q\}\cup\mathcal{R})\ \ge\ \sum_{t=1}^K w(C^{(t)},u_t).
\]
\end{restatable}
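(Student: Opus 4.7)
The plan is to first prove Lemma \ref{lem:mst_lower_bound} by an explicit tree-augmentation argument, and then chain it with the definition of $G$ to obtain the first inequality in Theorem \ref{thm:surrogate_guarantee}, finishing with an inductive Prim-style argument for the second part.

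For the lemma, I would exhibit one feasible spanning tree on $\mathcal{G}_C[S\cup\{v\}]$ whose weight exceeds $\text{MST}(S)$ by exactly $\max_{u\in S} w(v,u)$. Concretely, let $T^\star$ be a maximum spanning tree realizing $\text{MST}(S)$ and pick any $u^\star\in\arg\max_{u\in S} w(v,u)$. Since $\mathcal{G}_C$ is the complete weighted candidate graph, the edge $(v,u^\star)$ exists, and attaching it to $T^\star$ produces a spanning tree on $S\cup\{v\}$ of weight $\text{MST}(S)+w(v,u^\star)$ with no cycle created (as $v$ is a new vertex). Since $\text{MST}(S\cup\{v\})$ is the maximum over all spanning trees, the inequality $\text{MST}(S\cup\{v\})-\text{MST}(S)\ge\max_{u\in S}w(v,u)$ follows immediately.

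For the first claim of the theorem, I would expand the difference using the definitions of $G$ in Eq.~\ref{eq:objective_function} and $\text{Coherence}(\cdot)$ in Eq.~\ref{eq:coherence_def}:
\[
G(\mathcal{R}\cup\{C^*\})-G(\mathcal{R}) = w(C_q,C^*) + \lambda\bigl(\text{MST}(S\cup\{C^*\})-\text{MST}(S)\bigr),
\]
where $S=\{C_q\}\cup\mathcal{R}$. Applying Lemma \ref{lem:mst_lower_bound} with $v=C^*$ lower bounds the parenthesized term by $\max_{u\in S} w(C^*,u)$, and the resulting right-hand side is exactly $\Delta(C^*\mid\mathcal{R})$ from Eq.~\ref{eq:marginal_gain}. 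For the second claim, I would proceed by induction on the greedy iteration index $t$. At iteration $t$, the set $S_{t-1}=\{C_q\}\cup\{C^{(1)},\dots,C^{(t-1)}\}$ is already connected through the previously chosen edges, and by construction $u_t\in S_{t-1}$, so the new edge $(C^{(t)},u_t)$ attaches a fresh vertex to the existing component without forming a cycle. After $K$ iterations, the edge set $\{(C^{(t)},u_t)\}_{t=1}^{K}$ has exactly $K$ edges on the $K+1$ vertices $\{C_q\}\cup\mathcal{R}$ and is connected, hence a spanning tree $T$ of $\mathcal{G}_C[\{C_q\}\cup\mathcal{R}]$ with weight $\sum_{t=1}^{K}w(C^{(t)},u_t)$. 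Maximality of the MST then yields $\text{Coherence}(\mathcal{R})=\text{MST}(\{C_q\}\cup\mathcal{R})\ge \text{weight}(T)$.

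The main subtlety I anticipate is purely notational rather than combinatorial: I must keep the induced subgraphs consistent (the MST before and after adding $C^*$ are computed on $\mathcal{G}_C[S]$ and $\mathcal{G}_C[S\cup\{C^*\}]$ respectively), and confirm that the edges invoked in the augmentation and in the inductive construction all lie in $\mathcal{G}_C$, which is guaranteed because the candidate graph is complete over $\{C_q\}\cup\mathcal{C}_B$ by Definition \ref{def:candidate_graph}. No submodularity, exchange, or tightness argument is needed, so the proof stays elementary and avoids any appeal to the NP-hardness structure from Theorem \ref{thm:np_hard_formulation}.
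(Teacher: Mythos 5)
Your proposal is correct and follows essentially the same route as the paper's own proof: establish Lemma~\ref{lem:mst_lower_bound} by attaching the heaviest crossing edge $(v,u^\star)$ to a maximum spanning tree of $\mathcal{G}_C[S]$, expand $G(\mathcal{R}\cup\{C^*\})-G(\mathcal{R})$ via the definitions of $G$ and $\text{Coherence}$ to invoke the lemma, and argue the attachment edges form a spanning tree whose weight the MST dominates. Your version is, if anything, slightly more careful than the paper's (the explicit induction for acyclicity/connectivity and the remark that completeness of the candidate graph guarantees all invoked edges exist).
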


This analysis shows Eq.~\ref{eq:marginal_gain} is a conservative estimate of the true marginal gain in $G(\cdot)$: 
at each iteration, the objective increases by at least the computed surrogate gain. 
Moreover, the maintained attachment form a spanning tree over the final vertex set, whose total weight is lower-bounded by the MST-based coherence.

\noindent\textbf{Complexity Analysis.}
The per-query time complexity is dominated by two components: candidate retrieval and graph construction cost $O(Nd + N\log B + B^2 d)$, and greedy reranking cost $O(BK)$. Specifically, Stage~1 retrieves $B$ candidates via similarity search over $N$ columns ($O(Nd + N\log B)$) and constructs a candidate graph with $O(B^2)$ edges whose weights are computed in $O(B^2 d)$ time. Stage~2 maintains for each candidate its best connection (key) to the current tree; each of the $K$ iterations involves selecting the candidate with maximum marginal gain in $O(B)$ time and updating the keys of the remaining candidates in $O(B)$ time, yielding an $O(BK)$ total cost. The overall complexity is $O(Nd + N\log B + B^2 d + BK)$, and the space complexity is $O(B^2)$ to store the candidate graph's pairwise weights, which is efficient for typical parameters.

\section{Experimental Evaluation}
\label{sec:Experiment}

\begin{table}[t]
\centering
\caption{Statistics of benchmark datasets}
\vspace{-4pt}
\label{tab:datasets}
\begin{tabular}{|l|r|r|r|r|}
\hline
\cellcolor{lightgray}\textbf{Dataset} & \cellcolor{lightgray}\textbf{\#Tables} & \cellcolor{lightgray}\textbf{\#Columns} & \cellcolor{lightgray}\textbf{\#Queries} & \cellcolor{lightgray}\textbf{\#Join Pairs} \\ \hline \hline
USA        & 2,270   & 21,702  & 20 & 15,584 \\ \hline
CAN        & 9,590   & 70,982  & 30 & 57,124 \\ \hline
UK\_SG     & 1,166   & 11,293  & 20 & 8,164  \\ \hline
Webtable   & 2,922   & 21,787  & 30 & 21,029 \\ \hline
\end{tabular}
\vspace{-6pt}
\end{table}

\subsection{Experimental Setup}

\noindent\textbf{Datasets.}
We evaluate joinable table discovery on four real-world data lake benchmarks.
Three benchmarks are derived from government open data portals in the United States, Canada, and the United Kingdom/Singapore~\cite{opendata}; for convenience, we refer to them as \textsc{USA}, \textsc{CAN}, and \textsc{UK\_SG}, respectively.
We also use Webtable~\cite{webtable}, a benchmark derived from the WDC Web Table Corpus, and keep only the English relational web tables.
For each table, we extract the key column as specified by the benchmark metadata.
All benchmarks provide metadata such as table titles, column names, and cell context, and have been used in proir works \cite{bhagavatula2015tabel,dong2021efficient,wang2021retrieving,DBLP:conf/acl/YinNYR20,zhu2019josie,dong2023deepjoin,zhu2016lsh}.
Table~\ref{tab:datasets} reports their statistics.

\noindent\textbf{Baselines.} We compare the following methods.
(1) JOSIE~\cite{zhu2019josie}:
Token-based set similarity search using inverted index and posting lists with cost-model-based greedy merge algorithm.
(2) LSH Ensemble~\cite{zhu2016lsh}:
MinHash-based containment search using Locality Sensitive Hashing with dynamic partitioning.
(3) DeepJoin~\cite{dong2023deepjoin}: Pre-trained transformer model for column embedding.
(4) Snoopy~\cite{guo2025snoopy}: State-of-the-art column embedding model that uses proxy columns to improve semantic join discovery.
(5) Omnimatch~\cite{koutras2025omnimatch}: Multi-relational graph neural network that combines multiple similarity signals for table matching and joinability discovery.

\noindent\textbf{Metrics.}
Following prior work~\cite{deng2024lakebench}, we evaluate retrieval quality using Precision@K and Recall@K.
Let $Q$ denote the set of query columns, $\mathcal{R}_K(q)$ denote the set of Top-$K$ retrieved columns for query $q$, and $\mathcal{R}^*(q)$ denote the set of ground-truth joinable columns for $q$.
We compute:
\begin{equation}
\label{eq:precision_at_k}
\mathrm{Precision@K}=\frac{1}{|Q|}\sum_{q\in Q}\frac{|\mathcal{R}_K(q)\cap \mathcal{R}^*(q)|}{K},
\end{equation}
\begin{equation}
\label{eq:recall_at_k}
\mathrm{Recall@K}=\frac{1}{|Q|}\sum_{q\in Q}\frac{|\mathcal{R}_K(q)\cap \mathcal{R}^*(q)|}{|\mathcal{R}^*(q)|}.
\end{equation}
We report results for $K \in \{5, 15, 25\}$ to assess performance at cutoffs.


\noindent\textbf{Implementation Details.}
We implement HyperJoin in PyTorch 2.0.1 with Python 3.9.
For initial column encoding, we use a text encoder with learnable token embeddings (vocabulary size $\sim$1500) to encode table names and column names, followed by mean pooling.
For column content features, we use pre-trained fastText embeddings to encode cell values, which are then aggregated and projected through a two-layer MLP with ReLU activation and dropout.
The model architecture consists of 2 hypergraph convolutional layers and 2 MLP-Mixer layers, with embedding dimension 512.
We use three-level positional encoding with dimension 16 for table-level and column-level structural information, computed via Laplacian eigenvector decomposition on the joinable graph.
For training, we employ the Adam optimizer with learning rate $4 \times 10^{-4}$, batch size 64, and train for 30 epochs.
We use triplet loss with margin 1.0 for contrastive learning, with hard negative mining enabled.
Dropout rate is set to 0.05 to prevent overfitting.
For self-supervised learning, we generate augmented columns using DeepSeek-V3 via API with temperature 0.7 and top-p sampling 0.9.
For global coherent reranking, we set candidate size $B=50$ and coherence weight $\lambda=1.0$.
All experiments are conducted on a workstation with an Intel Xeon E E-2488 CPU (8 cores, 16 threads), an NVIDIA A2 GPU with 15GB memory, and 128 GB RAM.

\begin{table*}[t]
\centering
\small
\caption{Performance comparison across four benchmark datasets.
}
\vspace{-10pt}
\label{tab:main_results}
\setlength{\tabcolsep}{2.5pt}
\begin{tabular}{p{1.8cm}<{\centering} | p{0.48cm}<{\centering} p{0.48cm}<{\centering} p{0.48cm}<{\centering} p{0.48cm}<{\centering} p{0.48cm}<{\centering} p{0.48cm}<{\centering} | p{0.48cm}<{\centering} p{0.48cm}<{\centering} p{0.48cm}<{\centering} p{0.48cm}<{\centering} p{0.48cm}<{\centering} p{0.48cm}<{\centering} | p{0.48cm}<{\centering} p{0.48cm}<{\centering} p{0.48cm}<{\centering} p{0.48cm}<{\centering} p{0.48cm}<{\centering} p{0.48cm}<{\centering} | p{0.48cm}<{\centering} p{0.48cm}<{\centering} p{0.48cm}<{\centering} p{0.48cm}<{\centering} p{0.48cm}<{\centering} p{0.48cm}<{\centering}}
\toprule
\multirow{3}{*}{\textbf{Method}} 
& \multicolumn{6}{c|}{\textbf{Webtable}} 
& \multicolumn{6}{c|}{\textbf{USA}} 
& \multicolumn{6}{c|}{\textbf{CAN}} 
& \multicolumn{6}{c}{\textbf{UK\_SG}} \\
\cmidrule(lr){2-7} \cmidrule(lr){8-13} \cmidrule(lr){14-19} \cmidrule(lr){20-25}
& \multicolumn{3}{c}{Precision} & \multicolumn{3}{c|}{Recall} 
& \multicolumn{3}{c}{Precision} & \multicolumn{3}{c|}{Recall} 
& \multicolumn{3}{c}{Precision} & \multicolumn{3}{c|}{Recall} 
& \multicolumn{3}{c}{Precision} & \multicolumn{3}{c}{Recall} \\
\cmidrule(lr){2-4} \cmidrule(lr){5-7} 
\cmidrule(lr){8-10} \cmidrule(lr){11-13} 
\cmidrule(lr){14-16} \cmidrule(lr){17-19} 
\cmidrule(lr){20-22} \cmidrule(lr){23-25}
& @5 & @15 & @25 & @5 & @15 & @25 
& @5 & @15 & @25 & @5 & @15 & @25 
& @5 & @15 & @25 & @5 & @15 & @25 
& @5 & @15 & @25 & @5 & @15 & @25 \\
\midrule

JOSIE 
& 35.3 & 25.1 & 15.9 & 10.9 & 23.2 & 24.2
& 76.0 & 74.7 & 68.6 & 17.7 & 52.2 & 79.9
& 56.7 & 52.4 & 44.1 & 13.1 & 35.8 & 49.5
& 27.0 & 28.7 & 23.2 & 8.8 & 27.0 & 34.9 \\

LSH 
& 58.7 & 51.4 & 46.7 & 17.4 & 40.3 & 52.5
& 81.0 & 72.1 & 72.0 & 18.5 & 41.8 & 50.5
& 68.7 & 65.7 & 60.7 & 13.6 & 35.7 & 46.4
& 26.0 & 26.3 & 26.4 & 6.0 & 14.4 & 22.2 \\

DeepJoin 
& 69.3 & 55.8 & 45.9 & 22.1 & 51.0 & 68.1
& 87.0 & 68.7 & 48.8 & 20.2 & 48.1 & 57.1
& 64.0 & 53.3 & 42.4 & 14.7 & 36.8 & 47.6
& 66.0 & 55.0 & 42.2 & 19.6 & 48.5 & 60.2 \\

Snoopy 
& 71.3 & 55.3 & 46.3 & 22.8 & 50.0 & 68.3
& 86.0 & 74.7 & 63.8 & 19.9 & 51.9 & 74.0
& 79.3 & 65.1 & 58.7 & 18.0 & 45.0 & 66.7
& 72.0 & 60.3 & 44.0 & 20.8 & 53.5 & 64.0 \\

Omnimatch 
& 70.0 & 59.1 & 45.3 & 21.8 & 55.6 & 68.7
& 63.0 & 74.7 & 70.4 & 14.6 & 52.2 & 82.0
& 60.7 & 62.4 & 56.5 & 13.3 & 41.0 & 61.4
& 56.0 & 48.3 & 39.2 & 16.5 & 41.8 & 55.3 \\
\midrule

HyperJoin 
& 81.3 & 74.7 & 53.6 & 25.9 & 69.8 & 80.7
& 92.0 & 95.3 & 83.6 & 21.3 & 66.4 & 97.1
& 83.3 & 81.8 & 65.9 & 18.9 & 55.9 & 74.2
& 94.0 & 89.0 & 70.2 & 27.2 & 77.1 & 98.8 \\

\bottomrule
\end{tabular}
\end{table*}

\begin{figure*}[t]
\centering
\includegraphics[width=1\textwidth]{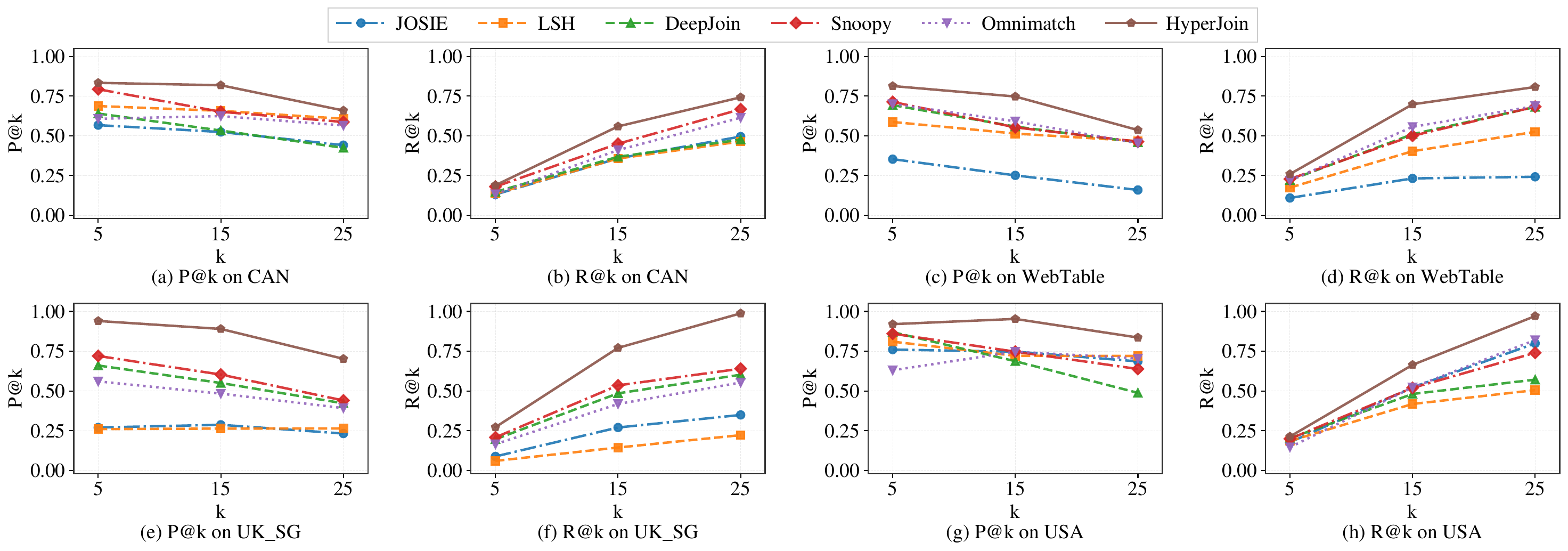}
\caption{Performance comparison across different datasets and k values.}
\label{fig:line_charts}
\end{figure*}

\subsection{Overall Evaluation}

\noindent\textbf{Exp-1: Overall Performance Comparison.}
Table~\ref{tab:main_results} and Figure~\ref{fig:line_charts} present the comprehensive comparison across all methods on four benchmark datasets.
Overall, HyperJoin achieves the best Precision@K and Recall@K across \textit{WebTable}, \textit{USA}, \textit{CAN} and \textit{UK\_SG}.
As shown in the table, LSH performs best among traditional baselines and PLM-based baseline generally demonstrate superior performance compared to traditional methods. 
Among the PLM-based models, Snoopy yields the best results. 
Notably, HyperJoin further outperforms Snoopy, achieving the highest overall accuracy.
Specifically, at \(K=15\), HyperJoin improves P@15 by 21.4 points and R@15 by 17.2 points on average compared to Snoopy.
The improvements are most pronounced on government-derived benchmarks, where HyperJoin exceeds Snoopy by margins of up to 28.7 points in P@15 (on UK\_SG).
This stems from the inherent properties of government data, which are characterized by strong structural and semantic regularities such as consistent column naming, rich metadata, and clear domain hierarchies.
HyperJoin's LLM-augmented hypergraph and hierarchical message passing are explicitly designed to capture these precise forms of contextual signals. 
In contrast, WebTable consists predominantly of numerical content where such semantic signals are inherently less frequent, yet HyperJoin still achieves substantial gains by effectively utilizing the available structural and metadata context.
These consistent margins demonstrate that HyperJoin provides not only higher accuracy but also stronger generalization across heterogeneous datasets.
Compared to traditional set-overlap methods (JOSIE, LSH Ensemble), HyperJoin demonstrates even larger improvements, with average R@25 gains exceeding 40\% across datasets.

We further analyze whether HyperJoin’s advantage is more evident when the number of discovered tables is limited (small \(K\)) or comparatively higher (large \(K\)).
At \(K=5\), HyperJoin achieves an average P@5 of 87.7\% and improves over Snoopy by 10.5 points in P@5 and 3.0 points in R@5, suggesting better early-stage ranking quality under limited discovery.
As \(K\) increases, the advantage becomes more pronounced, especially in recall.
At \(K=25\), HyperJoin exceeds Snoopy by 15.1 points in P@25 and 19.5 points in R@25 on average, indicating that it can discover substantially more joinable tables while maintaining strong precision.
This pattern is particularly clear on UK\_SG, where the recall margin grows from +6.4 points at \(K=5\) to +34.8 points at \(K=25\).
Overall, the results suggest that HyperJoin is competitive even at small \(K\), while its gains are larger at higher \(K\).
This shows jointly modeling relevance and coherence becomes increasingly beneficial as broader discovery makes relevance alone insufficient.

\begin{table}[t]
\centering
\caption{Ablation study using P@15 and R@15 (percentage). We quantify the effect of CR (Coherent Reranking) by comparing the full model with and without CR, and evaluate core module ablations by removing HIN or HG (Hypergraph).}
\label{tab:ablation_small}
\small
\setlength{\tabcolsep}{3.2pt}
\resizebox{\columnwidth}{!}{
\begin{tabular}{l | cc cc cc cc}
\toprule
\multirow{2}{*}{\textbf{Variant}} &
\multicolumn{2}{c}{\textbf{Webtable}} &
\multicolumn{2}{c}{\textbf{USA}} &
\multicolumn{2}{c}{\textbf{CAN}} &
\multicolumn{2}{c}{\textbf{UK\_SG}} \\
\cmidrule(lr){2-3}\cmidrule(lr){4-5}\cmidrule(lr){6-7}\cmidrule(lr){8-9}
& P@15 & R@15 & P@15 & R@15 & P@15 & R@15 & P@15 & R@15 \\
\midrule
Full  &
\textbf{74.67} & \textbf{69.75} &
\textbf{95.33} & \textbf{66.42} &
\textbf{81.78} & \textbf{55.89} &
\textbf{89.00} & \textbf{77.08} \\
w/o CR &
71.33 & 67.09 &
90.67 & 63.20 &
75.56 & 51.73 &
86.67 & 75.23 \\
w/o HIN &
51.11 & 46.88 &
75.67 & 52.83 &
60.44 & 41.54 &
69.67 & 58.73 \\
w/o HG &
55.11 & 49.75 &
72.33 & 50.34 & 64.89 & 45.01 &
59.67 & 53.19 \\
\bottomrule
\end{tabular}
}
\vspace{-4pt}
\end{table}

\noindent\textbf{Exp-2: Ablation Study.}
We conduct an ablation study to analyze the contribution of each component in \frameworkname. 
Table~\ref{tab:ablation_small} reports P@15 and R@15 on four datasets. 
We first examine coherent reranking (CR) by comparing the full model with and without CR. 
CR improves performance consistently across datasets, with average gains of +4.1 P@15.
The largest benefit is on CAN (P@15: 75.56$\rightarrow$81.78, +6.22; R@15: 51.73$\rightarrow$55.89, +4.16), while the improvement remains clear on USA (+4.66 P@15, +3.22 R@15), suggesting that set-level coherence suppresses candidates that are locally plausible but globally inconsistent.
We then remove key modules to understand where the performance comes from (under the same setting as w/o CR).
Removing HIN causes substantial and consistent degradations.
The most severe drop occurs on WebTable (P@15: 71.33$\rightarrow$51.11, -20.22; R@15: 67.09$\rightarrow$46.88, -20.21), highlighting the importance of fine-grained interaction modeling for filtering numerous superficially similar distractors in heterogeneous web tables.
Removing the hypergraph (HG) module shows a more dataset-dependent impact: the drop is particularly large on UK\_SG (P@15: 86.67$\rightarrow$59.67; R@15: 75.23$\rightarrow$53.19) and is also notable on USA (P@15: -18.34; R@15: -12.86), indicating that structural information is crucial when global regularities are strong.
Overall, the ablations demonstrate that HIN is a core contributor across datasets, HG is especially important on more structured corpora, and CR further complements both by improving set-level coherence.

\begin{figure}[t]
  \centering
  \captionsetup{skip=0pt}
  \includegraphics[width=0.45\textwidth]{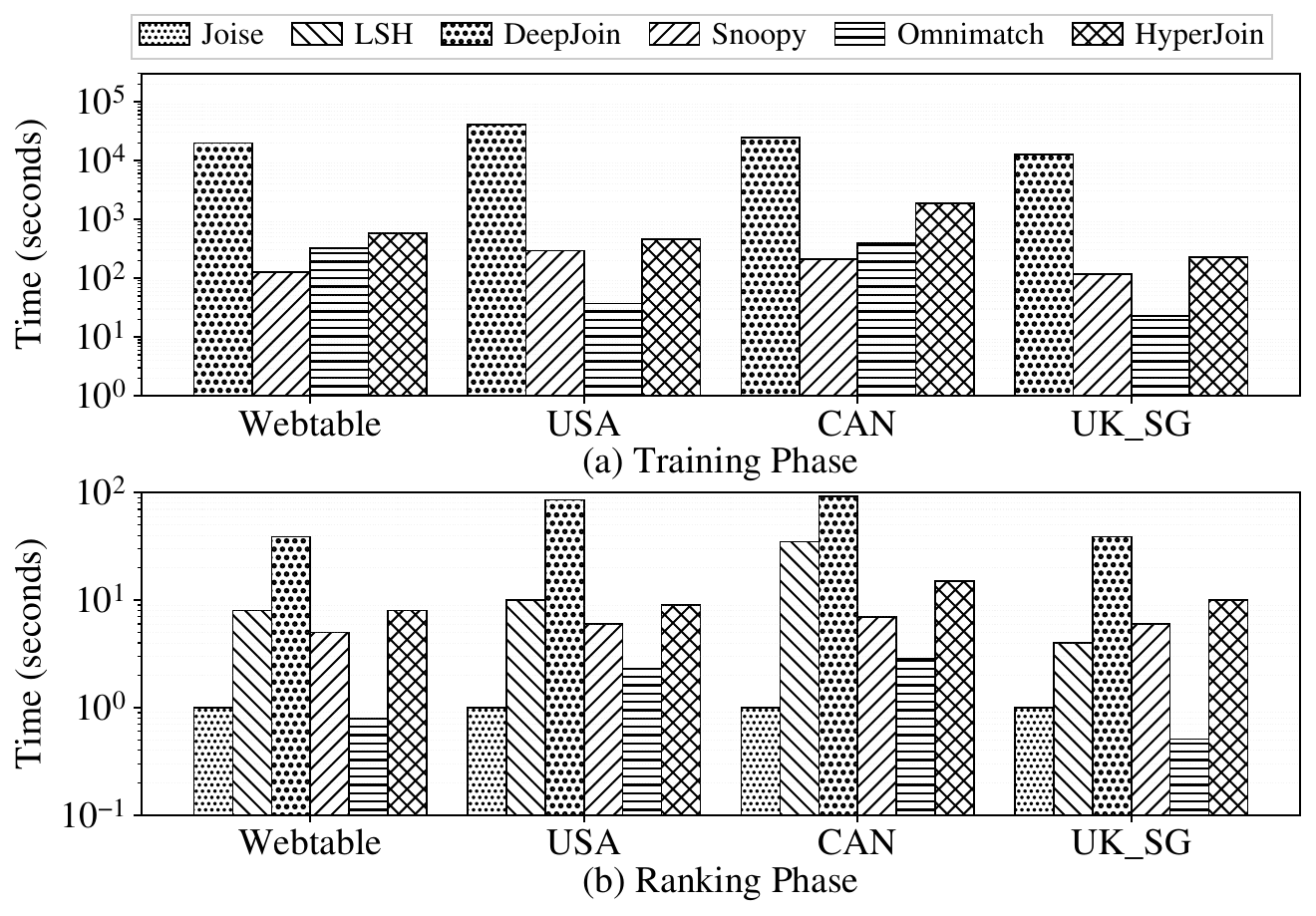}
  \caption{Training and evaluation efficiency across datasets.}
  \vspace{-4pt}
  \label{fig:efficiency}
\end{figure}

\noindent\textbf{Exp-3: Efficiency Evaluation.}
Figure~\ref{fig:efficiency} compares the training and evaluation efficiency of HyperJoin against baselines.
For the training phase, HyperJoin completes within 3.8--31.6 minutes across the four datasets (e.g., 31.6 minutes on CAN), which is substantially faster than DeepJoin that requires hours (e.g., 6.9 hours on CAN; 11.5 hours on USA).
This is expected since DeepJoin performs model fine-tuning (based on DistilBERT), which introduces a significantly higher training cost than lightweight training objectives.
In addition, OmniMatch incurs a substantial offline overhead for constructing column similarity graphs due to extensive pairwise column operations; in our setting, this preprocessing alone takes on the order of hours (e.g., 30.94 hours on USA and 43.37 hours on CAN).
For the ranking phase, HyperJoin answers queries in 8--15 seconds, and is consistently faster than DeepJoin on larger datasets, while remaining in the same order of magnitude as LSH and Snoopy.
Note that the reported DeepJoin time is measured end-to-end and includes rebuilding the target-side index for each run, rather than only the online top-$k$ retrieval step.


\begin{figure*}[t]
    \centering
    \includegraphics[width=0.96\textwidth]{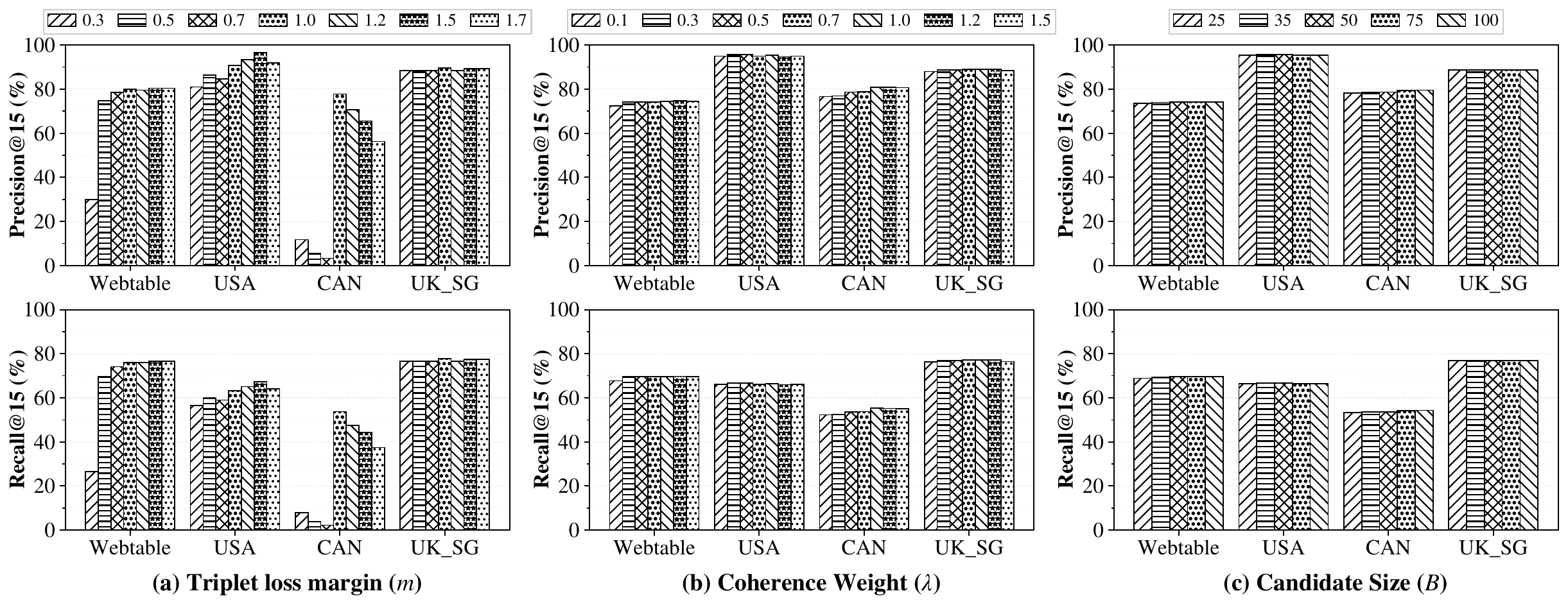}
    \vspace{-4pt}
    \caption{Hyperparameter sensitivity analysis on datasets.
    (a) Triplet loss margin $m$, (b) Global Cohenrent Seaech coherence weight $\lambda$, and (c) Global Cohenrent Seaech candidate size $B$.}
    \vspace{-6pt}
    \label{fig:hyperparam_sensitivity}
\end{figure*}

\subsection{Robustness Evaluation}

\noindent\textbf{Exp-4: Triplet Loss Margin $m$.}
We evaluate triplet loss margins $m \in \{0.3, 0.5, 0.7, 1.0, 1.2, 1.5, 1.7\}$ across four datasets.
As shown in Figure~\ref{fig:hyperparam_sensitivity}(a), the triplet loss margin is critical.
Too small margins can trigger severe degradation or even collapse, particularly on CAN and Webtable (e.g., CAN drops to 3.33\% P@15 / 2.25\% R@15 at $m=0.7$ and Webtable to 30.00\% P@15 / 26.63\% R@15 at $m=0.3$).
Performance becomes stable at $m=1.0$, yielding consistently reliable results across datasets and avoiding catastrophic failures.
Larger margins can further benefit some corpora (e.g., $m=1.5$ performs best on USA at 96.67\% / 67.36\% and also improves Webtable to 80.44\% / 76.52\%), but may slightly hurt CAN and brings only minor changes on UK\_SG (peak at $m=1.0$ with 89.67\% / 77.71\%).
Overall, we use $m=1.0$ as a robust default in all experiments.

\noindent\textbf{Exp-5: Coherence Weight $\lambda$.}
We evaluate $\lambda \in \{0.1, 0.3, 0.5, 0.7, 1.0,\allowbreak 1.2, 1.5\}$ across four datasets.
Figure~\ref{fig:hyperparam_sensitivity}(b) indicates that dataset-dependent optima, but a stable plateau around $\lambda \in [0.7, 1.2]$.
Within this range, UK\_SG and CAN generally perform best, while Webtable improves steadily up to $\lambda=1.2$ (74.67\% P@15 / 69.74\% R@15) and then changes little.
In contrast, USA prefers smaller $\lambda$ (best around $\lambda=0.3$--$0.5$ with 95.67\% / 66.66\%) and degrades mildly for larger values, suggesting over-regularization by coherence.
CAN gains substantially from coherence and peaks at $\lambda=1.0$--$1.2$ (80.89\% P@15 and 55.26\% R@15 at $\lambda=1.0$), followed by a slight decline.
Overall, we set $\lambda=1.0$ as a robust default, delivering near-optimal results on three datasets while remaining competitive on USA.

\noindent\textbf{Exp-6: Candidate Size $B$.}
We vary the candidate pool sizes $B \in \{25, 35, 50, 75, 100\}$ for Global Cohenrent Seaech.
As shown in Figure~\ref{fig:hyperparam_sensitivity}(c), performance is largely stable w.r.t.\ $B$,
with only minor fluctuations across datasets (max variation $\le$1.34 in P@15 and $\le$0.94 in R@15).
In particular, results quickly saturate at moderate pool sizes: UK\_SG is unchanged across all $B$,
and Webtable/USA plateau around $B \approx 35$--$50$.
CAN shows a mild monotonic improvement with larger pools, but the gains beyond $B=50$ are marginal.
Overall, we set $B=50$ as the default, balancing near-optimal quality and computational cost.

\section{Related Work}
\label{sec:Relatedwork}

Existing methods for joinable table discovery \cite{bogatu2020dataset,deng2024lakebench,dong2021efficient,fan2023semantics,cong2023pylon,hu2023automatic} can be broadly classified into traditional set-overlap methods and deep learning-based methods.
Traditional approaches, such as Jaccard similarity~\cite{fernandez2019lazo}, containment coefficients~\cite{deng2017silkmoth,yakout2012infogather,zhang2010multi}, and LSH~\cite{zhu2016lsh}, measure column overlap using set-based metrics and exact or fuzzy string matching.
While scalable, these methods rely on syntactic matching and fail to capture semantic relationships. Deep learning-based methods address this limitation through two main paradigms. Pre-trained language model approaches, such as DeepJoin~\cite{dong2023deepjoin} and Snoopy~\cite{guo2025snoopy}, employ transformers (e.g., BERT) to encode column values into semantic embeddings. Graph neural network approaches, such as OmniMatch~\cite{koutras2025omnimatch}, model table repositories as multi-relational graphs and use RGCN~\cite{schlichtkrull2018modeling} to learn structure-aware column embeddings.
To the best of our knowledge, HyperJoin is the first hypergraph-based framework for joinable table discovery. It models both intra-table and inter-table structural interactions in a unified representation learning phase, and further introduces a MST-based global coherent reranking strategy to return globally consistent join candidates.


\section{Conclusion}
\label{sec:Conclusion}

In this paper, we address the problem of joinable table discovery in data lakes and propose HyperJoin, a hypergraph-based framework.
HyperJoin follows a two-phase design with offline representation learning and an online ranking.
In the offline phase, we propose an LLM-augmented hypergraph that models both intra-table structure and inter-table joinability via dual-type hyperedges, and learns expressive column representations with HIN.
In the online phase, we introduce  efficient global coherent reranking module for a globally coherent result set, balancing scalability and effectiveness.
Extensive experiments on four public benchmarks show that HyperJoin consistently outperforms state-of-the-art methods, achieving superior Precision and Recall overall, while returning more coherent and semantically relevant join candidates.


\newpage
\bibliographystyle{ACM-Reference-Format}
\bibliography{sample}

\newpage
\section{Appendix}

\subsection{Overall forward process of \modelname.}
Algorithm~\ref{algo:sf_hgn} summarizes the complete forward pass of \modelname.
Lines~2-7 inject positional encodings by precomputing Laplacian eigenvectors offline and adding table-level and column-level encodings.
Lines~9-14 apply node-level GNN transformation and aggregate to hyperedge level with domain-specific transforms.
Lines~16-24 perform global hyperedge mixing via structure-aware attention and channel mixing.
Lines~26-29 propagate hyperedge embeddings back to columns with residual connections.
Finally, line~31 returns the final embeddings.

\begin{algorithm}[t]
\caption{\modelfullname~(\modelname)}
\label{algo:sf_hgn}
\LinesNumbered
\DontPrintSemicolon
\KwIn{Hypergraph $\mathcal{H} = (\mathcal{V}, \mathcal{E}, \mathbf{X}^v, \mathbf{X}^e, \boldsymbol{\Pi})$, Table IDs, Joinable pairs $\mathcal{P}_{\text{join}}$}
\KwOut{Final column embeddings $\{\mathbf{h}_v^{\text{final}}\}_{v \in \mathcal{V}}$}

\tcp{\textbf{Stage 1: Positional Encoding}}
Precompute Laplacian eigenvectors (offline):\;
\Indp
$\mathbf{A} \leftarrow \text{BuildAdjacencyMatrix}(\mathcal{P}_{\text{join}})$\;
$\mathbf{L}_{\text{norm}} = \mathbf{I}_N - \mathbf{D}^{-1/2}\mathbf{A}\mathbf{D}^{-1/2}
$\;
$\mathbf{V}_{\text{pe}} \leftarrow \text{EigenVectors}(\mathbf{L}_{\text{norm}}, k=16)$\;
\Indm
Inject table-level and column-level positional encodings:\;
\Indp
\For{each column/node $v$ with table ID $t$ and index $i$}{
    $\mathbf{h}_v^{(0)} \leftarrow \mathbf{x}_v^v + \alpha \cdot \mathbf{E}_{\text{tbl}}[t] + \beta \cdot \text{MLP}_{\text{pe}}(\mathbf{V}_{\text{pe}}[i,:])$\;
}
\Indm

\tcp{\textbf{Stage 2: Local Hyperedge Aggregation}}
$\mathbf{H} \leftarrow [\mathbf{h}_{v_1}^{(0)}, \ldots, \mathbf{h}_{v_N}^{(0)}]^T$\;
\For{$\ell = 1$ to $L=2$}{
    $\mathbf{H} \leftarrow \text{LayerNorm}(\text{ReLU}(\mathbf{H}\mathbf{W}^{(\ell)} + \mathbf{b}^{(\ell)}))$\;
}
Aggregate to hyperedge level with domain-specific transforms:\;
\Indp
$\mathbf{d}_e \leftarrow \boldsymbol{\Pi}^T \mathbf{1}_N$ \tcp*{Hyperedge degrees}
$\mathbf{D}_e \leftarrow \text{Diag}(\mathbf{d}_e)$\;
$\mathbf{X}^e \leftarrow \mathbf{D}_e^{-1}\boldsymbol{\Pi}^T \mathbf{H}$\;
$\mathbf{Z}^{(0)} \leftarrow \mathbf{X}^e$ \tcp*{mixer input tokens}
Partition $\mathbf{Z}^{(0)}$ into $\mathbf{Z}^{(0)}_{\text{inter}}$ and $\mathbf{Z}^{(0)}_{\text{intra}}$\;
$\tilde{\mathbf{Z}}_{\text{inter}} \leftarrow \mathbf{Z}^{(0)}_{\text{inter}}\mathbf{W}_{\text{inter}}$\;
$\tilde{\mathbf{Z}}_{\text{intra}} \leftarrow \mathbf{Z}^{(0)}_{\text{intra}}\mathbf{W}_{\text{intra}}$\;
$\tilde{\mathbf{Z}} \leftarrow [\tilde{\mathbf{Z}}_{\text{inter}};\tilde{\mathbf{Z}}_{\text{intra}}]$\;
\Indm

\tcp{\textbf{Stage 3: Global Hyperedge Mixing}}
$\mathbf{A}_{\text{hyperedge}} \leftarrow \text{RowNormalize}(\boldsymbol{\Pi}^T \boldsymbol{\Pi} - \text{diag}(\boldsymbol{\Pi}^T \boldsymbol{\Pi}))$\;

\For{$\ell = 1$ to $L_{\text{mixer}}=2$}{
    \tcp{\HIname~with structure bias}
    $\mathbf{Q}, \mathbf{K}, \mathbf{V} \leftarrow \text{LayerNorm}(\tilde{\mathbf{Z}}) [\mathbf{W}_Q, \mathbf{W}_K, \mathbf{W}_V]$\;
    $\mathbf{Z}_{\text{HI}} \leftarrow \tilde{\mathbf{Z}} + \text{MultiHead}\!\left(\text{softmax}\!\left(\frac{\mathbf{Q}\mathbf{K}^T}{\sqrt{d_k}} + \lambda \mathbf{A}_{\text{hyperedge}}\right)\mathbf{V}\right)\mathbf{W}_O$\;
    \tcp{\HRname}
    $\tilde{\mathbf{Z}} \leftarrow \mathbf{Z}_{\text{HI}} + \mathbf{W}_2 \text{GELU}(\mathbf{W}_1 \text{LayerNorm}(\mathbf{Z}_{\text{HI}})^T)^T$\;
}
$\mathbf{Z}_{\text{out}} \leftarrow \tilde{\mathbf{Z}}$ \tcp*{final hyperedge embeddings}
Let $\mathbf{z}_{e_j}$ denote the $j$-th row of $\mathbf{Z}_{\text{out}}$.\;

\tcp{\textbf{Propagate hyperedge back to columns}}
\For{$i=1$ to $N$}{
    $\mathcal{N}(v_i) \leftarrow \{ j \mid \boldsymbol{\Pi}_{ij}=1 \}$\;
    $\mathbf{h}_{v_i}^{\text{hyperedge}} \leftarrow \frac{1}{|\mathcal{N}(v_i)|} \sum_{j \in \mathcal{N}(v_i)} \mathbf{W}_{\text{h2c}} \mathbf{z}_{e_j}$\;
    $\mathbf{h}_{v_i}^{\text{final}} \leftarrow \text{L2Norm}(\text{LayerNorm}(\mathbf{h}_{v_i} + \mathbf{h}_{v_i}^{\text{hyperedge}}))$\;
}

\Return $\{\mathbf{h}_v^{\text{final}}\}_{v \in \mathcal{V}}$\;
\end{algorithm}

\subsection{Proofs of Theoretical Results}

This section collects proofs deferred from the main paper. For readability, we group proofs by the section where the corresponding theorem appears.

\subsubsection{Proof of Theorem~\ref{thm:bayes_risk}}

\paragraph{Setup.}
Let $Y_{ij}\in\{0,1\}$ denote whether two columns $(v_i,v_j)$ are joinable.
Let $\mathbf{x}_i$ be the initial feature of column $v_i$.
For each column $v_i$, define two context variables induced by the constructed hypergraph:
(i) its intra-table context $\mathcal{N}^{\text{intra}}(v_i)$ (the set of columns co-located with $v_i$ in the same table),
and (ii) its inter-table context $\mathcal{N}^{\text{inter}}(v_i)$ (the set of columns connected with $v_i$ through joinable
connectivity, e.g., within the same join-connected component).
We write the available information for deciding joinability as a feature set:
\begin{equation}
\begin{aligned}
\mathcal{F}_{\text{single}}(i,j) &= \{\mathbf{x}_i,\mathbf{x}_j\},\\
\mathcal{F}_{\text{multi}}(i,j) &=
\Big\{\mathbf{x}_i,\mathbf{x}_j,\,
\mathcal{N}^{\text{intra}}(v_i),\mathcal{N}^{\text{intra}}(v_j),\\
&\hspace{18pt}
\mathcal{N}^{\text{inter}}(v_i),\mathcal{N}^{\text{inter}}(v_j)\Big\}.
\end{aligned}
\end{equation}

Let $\Psi^*(\mathcal{F})$ denote the minimum achievable expected discovery risk under feature set $\mathcal{F}$:
\[
\Psi^*(\mathcal{F}) \;=\; \inf_{g}\; \mathbb{E}\big[\ell(g(\mathcal{F}),Y_{ij})\big],
\]
where $g$ is any measurable scoring function and $\ell(\cdot)$ is a bounded loss (e.g., logistic or 0-1 loss).

\bayesrisk*

\begin{proof}
Any predictor $g_{\text{single}}(\mathcal{F}_{\text{single}})$ is also feasible under $\mathcal{F}_{\text{multi}}$
by defining $g_{\text{multi}}(\mathcal{F}_{\text{multi}})=g_{\text{single}}(\mathcal{F}_{\text{single}})$,
i.e., simply ignoring the additional context variables. Therefore,
\begin{equation}
\begin{aligned}
\Psi^*(\mathcal{F}_{\text{multi}})
&=\inf_{g_{\text{multi}}}\mathbb{E}\!\left[\ell\!\left(g_{\text{multi}}(\mathcal{F}_{\text{multi}}),Y_{ij}\right)\right] \\
&\le \inf_{g_{\text{single}}}\mathbb{E}\!\left[\ell\!\left(g_{\text{single}}(\mathcal{F}_{\text{single}}),Y_{ij}\right)\right]
=\Psi^*(\mathcal{F}_{\text{single}}).
\end{aligned}
\end{equation}

If the conditional distribution of $Y_{ij}$ changes when conditioning on the added contexts,
then the Bayes decision rule under $\mathcal{F}_{\text{multi}}$ differs from that under $\mathcal{F}_{\text{single}}$ on a set of non-zero measure,
yielding strictly smaller Bayes risk.
\end{proof}

\subsubsection{Proof of Theorem~\ref{thm:prior_special}.}
\priorspecial*

\begin{proof}
This can be achieved by choosing $\mathcal{E}$ as $N$ singleton hyperedges $\mathcal{E}=\{\{v_1\},\ldots,\{v_N\}\}$ (or equivalently disabling hyperedge message passing).
If every hyperedge is singleton, then each column belongs to exactly one hyperedge of size 1.
Any within-hyperedge aggregation (mean/sum/attention) over a singleton returns the column itself,
thus Local Hyperedge Aggregation leaves representations unchanged.
Moreover, since hyperedges share no nodes, hyperedge adjacency becomes diagonal and Global Hyperedge Mixing
degenerates to per-hyperedge (thus per-column) transformations without cross-column exchange.
Therefore, the resulting embedding has the form $\mathbf{h}_i = \phi_\theta(\mathbf{x}_i)$.
DeepJoin instantiates $\phi_\theta$ as a PLM-based column encoder trained with self-supervised contrastive learning,
and Snoopy further applies a deterministic proxy-based transformation on top of such column-wise embeddings.
Hence both methods are recovered as restricted instances that do not leverage $\mathcal{N}^{\text{intra}}/\mathcal{N}^{\text{inter}}$. \end{proof}

\subsubsection{Proof of Theorem~\ref{thm:expressiveness}}

\expressiveness*

\begin{proof}
We first note that $\mathcal{H}_{\text{base}} \subseteq \mathcal{H}_{\text{pe}}$ since any encoder in $\mathcal{H}_{\text{base}}$
can be realized in $\mathcal{H}_{\text{pe}}$ by ignoring the positional terms (e.g., setting $\alpha=\beta=0$).

Since the encoder in $\mathcal{H}_{\text{base}}$ is permutation-equivariant and $u,v$ are 1-WL-indistinguishable with identical raw features,
it must output identical representations for $u$ and $v$ at every layer; hence no hypothesis in $\mathcal{H}_{\text{base}}$ can separate $u$ and $v$.

In contrast, under $\mathcal{H}_{\text{pe}}$, the augmented inputs satisfy
\[
  \mathbf{h}_u=\mathbf{h}^{(0)}_u+\text{PE}(u)\neq \mathbf{h}^{(0)}_v+\text{PE}(v)=\mathbf{h}_v.
\]
With an injective node-wise update, this distinction is preserved, so there exists a hypothesis in $\mathcal{H}_{\text{pe}}$ whose outputs separate $u$ and $v$ (e.g., via a linear readout).
Therefore, $\mathcal{H}_{\text{base}} \subsetneq \mathcal{H}_{\text{pe}}$.
\end{proof}

\subsubsection{Proof of Theorem~\ref{thm:np_hard_formulation}}

\nphard*

\begin{proof}
We prove NP-hardness by reduction from the $0$--$1$ Knapsack problem.
Given a knapsack instance with items $\{1,\ldots,n\}$, each item $i$ having integer weight $w_i$ and value $v_i$, and a capacity $W$, the goal is to choose a subset of items with total weight at most $W$ that maximizes the total value.

We construct an instance of our selection problem as follows.
We create a root node corresponding to the query column $C_q$.
For each item $i$, we create a path of $w_i$ candidate nodes $\{u_{i,1},\ldots,u_{i,w_i}\}$.
We add an edge between $C_q$ and $u_{i,1}$ with weight $0$.
For each $j=1,\ldots,w_i-2$, we add an edge $(u_{i,j},u_{i,j+1})$ with weight $0$.
We assign the item value to the last edge on the path by setting the edge weight $(u_{i,w_i-1},u_{i,w_i})$ to be $v_i$.
We further add $W$ dummy candidate nodes $\{d_1,\ldots,d_W\}$ and connect each $d_t$ to $C_q$ with an edge weight $0$.
We set the result size to $K=W$ and set all query relevance weights to zero, so the objective is determined solely by the coherence term.

We claim that in the induced subgraph on $\{C_q\}\cup \mathcal{R}$, the coherence value equals the total value of the knapsack items encoded by $\mathcal{R}$.
Because coherence is computed on the induced subgraph, any selected node must be connected to $C_q$ using only nodes in $\mathcal{R}\cup\{C_q\}$.
If $\mathcal{R}$ contains $u_{i,w_i}$, then to make $u_{i,w_i}$ connected to $C_q$ in the induced subgraph, $\mathcal{R}$ must contain the entire prefix $\{u_{i,1},\ldots,u_{i,w_i-1}\}$.
Therefore, obtaining the positive weight $v_i$ on the last edge necessarily consumes exactly $w_i$ selected nodes.
Selecting any strict prefix of the path yields no positive contribution because all edges on the prefix have weight $0$.

Given any feasible knapsack solution $S$ with $\sum_{i\in S} w_i \le W$, we form a selection set $\mathcal{R}$ by including all nodes on the paths of items in $S$ and filling the remaining budget with dummy nodes so that $|\mathcal{R}|=W$.
In the induced subgraph, a maximum spanning tree rooted at $C_q$ includes the last edge of each chosen item path, contributing exactly $\sum_{i\in S} v_i$, and all other required edges have weight $0$, so the coherence equals the knapsack value.
Conversely, from any selection set $\mathcal{R}$ of size $W$, we extract the corresponding item subset $S$ consisting of those indices $i$ for which $u_{i,w_i}\in \mathcal{R}$.
By the connectivity requirement on the induced subgraph, this implies that all $w_i$ nodes on the path are selected, so $\sum_{i\in S} w_i \le W$.
The coherence of $\mathcal{R}$ is then exactly $\sum_{i\in S} v_i$ because positive weights only appear on the last edges of selected item paths.

Thus, solving the selection problem optimally would solve the $0$--$1$ Knapsack problem optimally, which is NP-hard.
Therefore, the coherence-aware top-$K$ selection problem is NP-hard.
\end{proof}

\subsubsection{Proof of Theorem~\ref{thm:surrogate_guarantee}}

\mstlowerbound*

\begin{proof}
Let $T$ be a MaxST of $\mathcal{G}_C[S]$ and $u^*\in\arg\max_{u\in S} w(v,u)$. Then $T\cup\{(v,u^*)\}$ is a spanning tree of $\mathcal{G}_C[S\cup\{v\}]$, so
$\text{MST}(S\cup\{v\})\ge \text{MST}(S)+\max_{u\in S}w(v,u)$.
\end{proof}

\surrogateguarantee*
\begin{proof}
\[
G(\mathcal{R}\cup\{C^*\})-G(\mathcal{R})
= w(C_q,C^*)+\lambda\big(\text{MST}(S\cup\{C^*\})-\text{MST}(S)\big)
\]
and Lemma~\ref{lem:mst_lower_bound} gives
$\text{MST}(S\cup\{C^*\})-\text{MST}(S)\ge \max_{u\in S} w(C^*,u)$, hence the first claim.
The second claim follows since each iteration adds one new vertex and one attachment edge to the connected component rooted at $C_q$, and the MST maximizes weight over all spanning trees.
\end{proof}

\end{document}